\declaretheorem[name=Theorem, numberwithin=section]{theorem}
\declaretheorem[name=Lemma, numberwithin=section]{lemma}
\declaretheorem[name=Corollary, numberwithin=section]{corollary}
\declaretheorem[name=Proposition, numberwithin=section]{proposition}
\declaretheorem[name=Remark, numberwithin=section]{remark}
\newtheorem{algorithm}{Algorithm}
\newtheorem{assumption}{Assumption}
\newlist{assA}{enumerate}{10}
\setlist[assA]{label={(\bfseries A\arabic*):}, ref={A\arabic*}, resume}
\newlist{assB}{enumerate}{10}
\setlist[assB]{label={(\bfseries B\arabic*):}, ref={B\arabic*}}
\newlist{assC}{enumerate}{10}
\setlist[assC]{label={(\bfseries C\arabic*):}, ref={C\arabic*}}
\DeclareMathOperator*{\argmin}{argmin}
\newcommand{\grad}{\nabla}
\newcommand{\jac}{\partial}
\newcommand{\hess}{\grad^2}
\newcommand{\norm}[1]{\lVert{#1}\rVert}
\newcommand{\fo}{E} 
\newcommand{\fin}{\ell} 
\newcommand{\R}{\mathbb{R}}
\newcommand{\N}{\mathbb{N}}
\newcommand{\Low}{\eta_{1,\lambda}}
\newcommand{\Lol}{\eta_{2,\lambda}}
\newcommand{\Bo}{L_{E,\lambda}}
\newcommand{\D}{D_\lambda}
\newcommand{\IMD}{AID}
\newcommand{\ITD}{ITD}
\newcommand{\q}{q_\lambda}
\newcommand{\rhol}{\nu_{2, \lambda}}
\newcommand{\rhow}{\nu_{1, \lambda}}
\newcommand{\LPhi}{L_{\Phi, \lambda}}
\newcommand{\rf}{\rho_\lambda}
\newcommand{\hrf}{\sigma_{\lambda}}
\newcommand{\Llw}{L_\fin(\lambda)}
\newcommand{\cond}{\kappa(\lambda)}
\newcommand{\rholl}{\hat\rho_{2, \lambda}}
\newcommand{\rhowl}{\hat\rho_{1, \lambda}}
\newcommand{\mufi}{\mu_\fin(\lambda)}
\newcommand{\sz}{\alpha(\lambda)}
\newcommand{\dsz}{\grad \alpha(\lambda)}
\newcommand{\pd}{\mu_\lambda}
\newcommand{\A}{A_\lambda}
\newcommand{\B}{b_\lambda}
\newcommand{\C}{c_\lambda}
\newcommand{\Al}{Z_\lambda}
\newcommand{\hA}{A_{\lambda,t}}
\newcommand{\hAl}{Z_{\lambda,t}}
\newcommand{\hB}{b_{\lambda,t}}
\newcommand{\hC}{c_{\lambda,t}}
\newcommand{\Z}{v(\lambda)}
\newcommand{\hZ}{v_t(\lambda)}
\newcommand{\wt}{w_t(\lambda)}
\newcommand{\vo}{v_{t,0}(\lambda)}
\newcommand{\vk}{v_{t,k}(\lambda)}
\newcommand{\vkm}{v_{t,k-1}(\lambda)}
\newcommand{\wpo}{u_{t,0}(\lambda)}
\newcommand{\wpk}{u_{t,k}(\lambda)}
\newcommand{\wpkm}{u_{t,k-1}(\lambda)}
\newcommand{\wpkmm}{u_{t,k-2}(\lambda)}
\newcommand{\Dt}{D}
\newcommand{\Dv}{D'}
\newcommand{\kert}{K}
\newcommand{\kerv}{K'}
\newcommand{\Xv}{X'}
\newcommand{\Xt}{X}
\newcommand{\yt}{y}
\newcommand{\yv}{y'}
\crefname{appsec}{appendix}{appendices}
\crefname{assAi}{assumption}{assumptions}
\crefname{assBi}{assumption}{assumptions}
\crefname{assCi}{assumption}{assumptions}
\begin{document}

\twocolumn[
\icmltitle{On the Iteration Complexity of Hypergradient Computation}



\icmlsetsymbol{equal}{*}

\begin{icmlauthorlist}
\icmlauthor{Riccardo Grazzi}{iit,ucl}
\icmlauthor{Luca Franceschi}{iit,ucl}
\icmlauthor{Massimiliano Pontil}{iit,ucl}
\icmlauthor{Saverio Salzo}{iit}
\end{icmlauthorlist}

\icmlaffiliation{iit}{Computational Statistics and Machine Learning, Istituto Italiano di Tecnologia, Genoa, Italy}
\icmlaffiliation{ucl}{Department of Computer Science, University College London, London, UK}

\icmlcorrespondingauthor{Riccardo Grazzi}{riccardo.grazzi@iit.it}

\icmlkeywords{Machine Learning, ICML, Meta-learning, Hyperparameter Optimization, Gradient-based, bilevel, hypergradient, equilibrium}

\vskip 0.3in
]



\printAffiliationsAndNotice{}  

\begin{abstract}
We study a general class of bilevel problems,  consisting in the minimization of an upper-level objective which depends on the solution to a parametric fixed-point equation. 
Important instances arising in machine learning include hyperparameter optimization, meta-learning, and certain graph and recurrent neural networks. Typically the gradient of the upper-level objective (hypergradient) is hard or even impossible to compute exactly, which has raised the interest in approximation methods. We investigate some popular approaches to compute the hypergradient, based on reverse mode iterative differentiation and approximate implicit differentiation. Under the hypothesis that the fixed point equation is defined by a contraction mapping, we present a unified analysis which allows for the first time to quantitatively compare these methods, providing explicit bounds for their iteration complexity. This analysis suggests a hierarchy in terms of computational efficiency among the above methods, with approximate implicit differentiation based on conjugate gradient performing best. We present an extensive experimental comparison among the methods which confirm the theoretical findings. 
\end{abstract}


\section{Introduction}
Several problems arising in machine learning and related disciplines can be formulated as bilevel problems, where the lower-level problem is a fixed point equation whose solution is part of an upper-level objective.
Instances of this framework include hyperparameter optimization~\citep{maclaurin2015gradient,franceschi2017forward,liu2018darts,lorraine2019optimizing,elsken2019neural}, meta-learning \citep{andrychowicz2016learning,finn2017model,franceschi2018bilevel}, 
as well as recurrent and graph neural networks
\citep{almeida1987learning,pineda1987generalization,scarselli2008graph}.

In large scale scenarios, there are thousands or even millions of parameters to find in the upper-level problem, making black-box approaches like grid and random search \citep{bergstra2012random} or Bayesian optimization \citep{snoek2012practical} impractical. This has made gradient-based methods \citep{domke2012generic,maclaurin2015gradient,pedregosa2016hyperparameter} popular in such settings, but also it has raised the issue of designing efficient procedures to approximate the gradient of the upper-level objective (hypergradient) when finding a solution to the lower-level problem is costly.

The principal goal of this paper is to study the degree of approximation 
to the hypergradient of certain iterative schemes based on iterative or implicit differentiation\footnote{ 
The reader interested in the convergence analysis of gradient-based algorithms for bilevel optimization is referred to \cite{pedregosa2016hyperparameter, rajeswaran2019meta} and references therein.}.
In the rest of the introduction we present the bilevel framework, alongside some relevant examples in machine learning. We then outline the gradient approximation methods that we analyse in the paper and highlight our main contributions. Finally, we discuss and compare  our results with previous work in the field.

\paragraph{The bilevel framework.}
In this work, we consider the following bilevel  problem.
\begin{equation}
\begin{aligned}
\label{mainprob}
&\min_{\lambda \in \Lambda} f(\lambda) := \fo(w(\lambda), \lambda)\\
&\text{\ \ \ subject~to ~}w(\lambda) = \Phi(w(\lambda), \lambda),
\end{aligned}
\end{equation}
where $\Lambda$  is a closed convex subset of $\R^n$ and 
$\fo\colon \R^d\times \Lambda \to \R$ and $\Phi\colon\R^d \times \Lambda \rightarrow \R^d$
are continuously differentiable functions.  
We assume that 
the lower-level problem in \eqref{mainprob} (which is a fixed point-equation) admits a unique solution. However, in general, explicitly computing such solution is either impossible or expensive.
When $f$ is differentiable,  
this issue affects the evaluation of the hypergradient $\grad f (\lambda)$, which at best can only be approximately computed.

A prototypical example of the bilevel problem~\eqref{mainprob} is
\begin{equation}
\begin{aligned}
\label{mainprob1}
&\min_{\lambda \in \Lambda} f(\lambda) := \fo(w(\lambda), \lambda ))\\
&\text{\ \ \ subject~to ~}w(\lambda) =  {\rm arg}\min_{u\in \R^d} \fin(u, \lambda),
\end{aligned}
\end{equation}
where $\ell\colon \R^d\times \Lambda \to \R$
is a loss function, twice continuously differentiable and strictly convex w.r.t.~the first variable. Indeed if we let
$\Phi$ be such that
 $\Phi(w, \lambda) = w - \alpha(\lambda) \grad_1 \fin(w, \lambda)$, where $\alpha\colon\Lambda \to \R_{++}$ is differentiable, then problem~\eqref{mainprob1} and 
 problem~\eqref{mainprob} are equivalent. 
Specific examples of problem~\eqref{mainprob1}, which include hyperparameter optimization and meta-learning, are discussed in Section~\ref{sec:4.1}.


Other instances of the bilevel problem~\eqref{mainprob}, which are not of the special form of problem~\eqref{mainprob1}, arise in the context of so-called equilibrium models
(EQM). Notably, these comprise some types of connectionist models employed in domains with structured data. Stable recurrent neural networks \citep{miller2019stable}, graph neural networks \citep{scarselli2008graph} and the formulations by \citet{bai2019deep} belong to this class. EQM differ from standard (deep) neural networks in that the internal representations 
are given by fixed points of learnable dynamics rather than compositions of a finite number of layers.
The learning problem for such type of models can be written as 
\begin{equation}
\label{mainprob2}
\begin{aligned}
&\min_{\lambda=(\gamma, \theta)\in\Lambda} f(\lambda):= \sum_{i=1}^n E_i(w_i(\gamma), \theta),\\
&\text{\ \ \ subject~to }w_i(\gamma) = \phi_i(w_i(\gamma), \gamma), \text{ for } 1 \leq i \leq n,
\end{aligned}
\end{equation}
where the operators $\phi_i: \R^d \times \Lambda \rightarrow \R^d$ (here $\Phi=(\phi_i)_{i=1}^n$) are associated to the training points $x_i$'s, and the error functions $E_i$ are the losses incurred by a standard supervised algorithm on the transformed dataset $\{w_i(\gamma),y_i\}_{i=1}^n$. 
A specific example is discussed in Section~\ref{sec:4.2}

In this paper, we present a unified analysis which allows for the first time to quantitatively compare
popular methods to approximate $\grad f(\lambda)$ in the general setting of problem~\eqref{mainprob}. The strategies we consider can be divided in two categories:
\begin{enumerate}
\vspace{-.2truecm}
\item  \emph{Iterative Differentiation \rm{(\ITD)}} \citep{maclaurin2015gradient, franceschi2017forward, franceschi2018bilevel, finn2017model}. 
One defines the sequence of functions $f_t(\lambda) = \fo(w_t(\lambda), \lambda)$, where $w_t(\lambda)$ are 
the fixed-point iterates generated by the map $\Phi(\cdot,\lambda)$.
Then
 $\grad f(\lambda)$ is approximated by $\grad f_t(\lambda)$,   which in turn is computed using forward (FMAD) or reverse (RMAD) mode automatic differentiation \cite{griewank2008evaluating}. 
\vspace{-.2truecm}
\item \emph{Approximate Implicit Differentiation {\rm(\IMD)}} \citep{pedregosa2016hyperparameter, rajeswaran2019meta, lorraine2019optimizing}. 
First, an (implicit) equation for $\grad f(\lambda)$ is obtained through the implicit function theorem. Then, this equation is  
approximately solved by using a two stage scheme. We analyse two specific methods in this class: 
the \emph{fixed-point method} \citep{lorraine2019optimizing}, also referred to as recurrent backpropagation in the context of recurrent neural networks \cite{almeida1987learning,pineda1987generalization},
and the \emph{conjugate gradient method}~\cite{pedregosa2016hyperparameter}.
\end{enumerate}
Both schemes can be efficiently implemented using automatic differentiation \citep{griewank2008evaluating,baydin2018automatic} achieving similar cost in time, while \ITD{} has usually a larger memory cost than \IMD{}\footnote{This is true when ITD is implemented using RMAD, which is the standard approach 
when $\lambda$  is high dimensional. 
}.

\paragraph{Contributions.}
Although there is a vast amount of literature on 
the two hypergradient approximation
strategies previously described, it remains unclear whether it is better to use one or the other. In this work, we shed some light over this issue both theoretically and experimentally. Specifically our contributions are the following:
\begin{itemize}
\vspace{-.2truecm}
\item We provide iteration complexity results for \ITD\ and \IMD\ when the mapping defining the fixed point equation is  a contraction. In particular, we prove non-asymptotic linear rates for 
the approximation errors of both approaches.
\vspace{-.1truecm}
\item  
We make a theoretical and numerical comparison among different \ITD\ and \IMD\ strategies 
considering several experimental scenarios.
\vspace{-.2truecm}
\end{itemize}
We note that, to the best of our knowledge, non-asymptoptic rates of convergence for \IMD\ were only recently given in the case of meta-learning \cite{rajeswaran2019meta}.
Furthermore, we are not aware of any previous results
about non-asymptotic rates of convergence for \ITD. 

\paragraph{Related Work.}
Iterative differentiation for functions defined implicitly has been extensively studied in the automatic differentiation literature. In particular \citep[Chap.~15]{griewank2008evaluating} derives asymptotic linear rates for \ITD{} under the assumption that $\Phi(\cdot, \lambda)$ is a contraction.
Another attempt to  theoretically analyse 
 ITD is made by \citet{franceschi2018bilevel} in the context of the bilevel problem~\eqref{mainprob1}.
There,
the authors provide sufficient conditions for the asymptotic convergence of the set of minimizers of the approximate problem to the set of minimizers of the bilevel problem. In contrast, in this work, we give rates for the convergence of the approximate hypergradient $\grad f_t(\lambda)$ to the true one (i.e. $\nabla f(\lambda)$).
\ITD\ is also considered in \cite{shaban2019truncated} where $\grad f_t(\lambda)$ is approximated via a procedure which is reminiscent of truncated backpropagation. The authors bound the norm of the difference between $\grad f_t(\lambda)$ and its truncated version as a function of the truncation steps.  This is different from our analysis which directly considers the problem of estimating the gradient of $f$.

In the case of AID, an 
asymptotic analysis is presented 
in \cite{pedregosa2016hyperparameter},
where the author proves the convergence of an inexact gradient projection algorithm for the minimization of the function $f$ defined in problem~\eqref{mainprob1}, using
increasingly accurate estimates of $\grad f(\lambda)$. Whereas
 \citet{rajeswaran2019meta} present complexity results 
in the setting of meta-learning with biased regularization. 
Here, we extend this line of work by providing complexity results for \IMD\ in the more general setting of problem~\eqref{mainprob}.




We also mention the papers by \citet{amos2017optnet} and \citet{amos2019differentiable}, which present techniques to differentiate through the solutions of quadratic and cone programs respectively.
Using such techniques allows one to treat these optimization problems as layers of a neural network and to use backpropagation for the end-to-end training of the resulting learning model.
In the former work, the gradient is obtained by implicitly differentiating through the KKT conditions of the lower-level problem, while the latter performs implicit differentiation on the residual map of Minty's parametrization. 

A different approach to solve bilevel problems of the form \eqref{mainprob1} is presented by \citet{mehra2019penalty}, who consider a sequence of ``single level'' objectives involving a quadratic regularization term penalizing violations of the lower-level first-order stationary conditions. The authors provide asymptotic convergence guarantees for the method, as the regularization parameter tends to infinity, and show that it outperforms both \ITD\ and \IMD\ on different settings where the lower-level problem is non-convex.

All previously mentioned works except \cite{griewank2008evaluating} consider bilevel problems of the form \eqref{mainprob1}. Another exception is  \cite{liao2018reviving}, which proposes two improvements to recurrent backpropagation, one based on conjugate gradient on the normal equations
, and another based on Neumann series approximation of the inverse. 


\section{Theoretical Analysis}\label{sec:theory}
In this section we establish non-asymptotic bounds on the hypergradient (i.e. $\grad f(\lambda)$) approximation errors 
for both \ITD\ and \IMD\ schemes (proofs can be found in \Cref{sec:proofs}). 
In particular, in Section~\ref{se:dynaDiff} we address the iteration complexity of \ITD,
while in Section~\ref{se:AID}, after giving a general bound for \IMD,\ we
focus on two popular implementations of the \IMD\ scheme: one based on the conjugate gradient method and the other on the fixed-point method.

\vspace{-0.7mm}
\paragraph{Notations.}
We denote 
by $\norm{\cdot}$ applied to a vector (matrix) the Euclidean (spectral) norm. 
For a differentiable function 
$f\colon\R^n\to\R^m$
we denote by $f^\prime(x) \in \R^{m\times n}$
the derivative of $f$ at $x$. When $m=1$, we denote by
$\nabla f\colon \R^n \to \R^n$
the gradient of $f$.
For a real-valued function $g\colon \R^n\times \R^m \to \R$ we denote by $\nabla_1 g(x,y) \in \R^n$ and $\nabla_2 g(x,y)\in \R^m$ the partial derivatives w.r.t.~the first and second variable respectively. We also denote by $\nabla^2_{1} g(x,y) \in \R^{n\times n}$ and $\nabla^2_{12} g(x,y)\in \R^{n\times m}$ the second derivative of $g$ w.r.t.~the first variable and the mixed second derivative of $g$ w.r.t.~the first and second variable.
For a vector-valued function $h\colon\R^n \times\R^m \to \R^k$ 
we denote, by $\partial_1 h(x,y) \in \R^{k\times n}$ and $\partial_2 h(x,y) \in \R^{k\times m}$ the partial Jacobians  w.r.t.~the first and second variable respectively at $(x,y) \in \R^m \times \R^n$.

In the rest of the section, referring to problem~\eqref{mainprob}, we will group the assumptions as follows. Assumption~\ref{assA} is general while Assumption~\ref{ass:contraction} and \ref{assC} are specific enrichments for \ITD{} and \IMD{} respectively.

\begin{assumption}
\label{assA}
For every $\lambda \in \Lambda$,
\vspace{-1.6ex}
\setlist[enumerate]{itemsep=0mm}
\begin{enumerate}[{\rm(i)}]

\item\label{ass:fixedpoint} 
$w(\lambda)$ is the unique fixed point of $\Phi(\cdot, \lambda)$.

\item\label{ass:invert}
$I -\jac_1 \Phi(w(\lambda), \lambda)$ is invertible.

\item\label{ass:philip} 
$\jac_1\Phi (\cdot, \lambda)$ and $\jac_2\Phi (\cdot, \lambda)$ are Lipschitz 
continuous with constants $\rhow$ and $\rhol$ respectively.

\item\label{ass:foLip}  
$\grad_1 \fo(\cdot, \lambda)$ and $\grad_2 \fo(\cdot, \lambda)$
are Lipschitz continuous with constants $\Low$ and $\Lol$ respectively.
\end{enumerate}
\end{assumption}

A direct consequence of Assumption~\ref{assA}\ref{ass:fixedpoint}-\ref{ass:invert} and of the implicit function theorem is that $ w(\cdot)$ and $f(\cdot)$ are differentiable 
on $\Lambda$. Specifically, for every $\lambda \in \Lambda$, it holds that
\begin{align}
\label{eq:gradexist_1}
w'(\lambda) &= (I - \jac_1\Phi(w(\lambda), \lambda))^{-1}\jac_2 \Phi(w(\lambda), \lambda) \\
\label{eq:gradexist_2}\grad{f}(\lambda) &= \grad_2 \fo (w(\lambda), \lambda)  + w'(\lambda)^\top \grad_1 \fo (w(\lambda), \lambda).
\end{align}
See Theorem~\ref{th:gradexist} for details.
In the special case of problem~\eqref{mainprob1}, equation \eqref{eq:gradexist_1} reduces (see Corollary~\ref{cor:gradexist}) to
\begin{align*}
w^\prime(\lambda) = - \grad^2_1\fin(w(\lambda), \lambda)^{-1} \grad_{21}^2\fin(w(\lambda), \lambda). 
\end{align*}
Before starting with the study of the two methods \ITD\ and \IMD, 
we give a lemma which introduces
three additional constants that will occur in the complexity bounds. 
\begin{lemma}
\label{lm:boundEPhi}
Let $\lambda \in \Lambda$ and let $D_\lambda >0$ 
be such that $\norm{w(\lambda)} \leq D_\lambda$.
Then there exist $\Bo, \LPhi \in \R_{+}$ s.t.
\[
\sup_{\norm{w} \leq 2 \D} \hspace{-.22truecm}\lVert 
\grad_1 \fo({w},{\lambda}) \rVert \leq \Bo,\hspace{-.18truecm}~~\sup_{\norm{w} \leq 2\D}\hspace{-.18truecm}
\norm{ \jac_2 \Phi(w, \lambda) } \leq \LPhi\\
\]
\end{lemma}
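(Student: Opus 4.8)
The claim is a soft consequence of the regularity in Assumption~\ref{assA}, and the plan is to extract from it \emph{explicit} expressions for $\Bo$ and $\LPhi$ (which is convenient, since these constants will later be tracked through the iteration-complexity bounds). The only thing to exploit is that the ball $\{w\in\R^d:\norm{w}\le 2\D\}$ sits close to the fixed point $w(\lambda)$: for any such $w$, the triangle inequality together with $\norm{w(\lambda)}\le\D$ gives
\begin{equation*}
\norm{w-w(\lambda)}\le\norm{w}+\norm{w(\lambda)}\le 2\D+\D=3\D .
\end{equation*}

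Next I would bound $\grad_1\fo$. By Assumption~\ref{assA}\ref{ass:foLip}, $\grad_1\fo(\cdot,\lambda)$ is $\Low$-Lipschitz, so for every $w$ with $\norm{w}\le 2\D$,
\begin{equation*}
\norm{\grad_1\fo(w,\lambda)}\le\norm{\grad_1\fo(w(\lambda),\lambda)}+\Low\norm{w-w(\lambda)}\le\norm{\grad_1\fo(w(\lambda),\lambda)}+3\D\,\Low ,
\end{equation*}
and since $\fo$ is continuously differentiable the right-hand side is a finite constant; one may therefore take $\Bo:=\norm{\grad_1\fo(w(\lambda),\lambda)}+3\D\,\Low$. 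The same reasoning applied to $\jac_2\Phi(\cdot,\lambda)$, which is $\rhol$-Lipschitz by Assumption~\ref{assA}\ref{ass:philip}, yields $\norm{\jac_2\Phi(w,\lambda)}\le\norm{\jac_2\Phi(w(\lambda),\lambda)}+3\D\,\rhol$ for all $\norm{w}\le2\D$, so one may take $\LPhi:=\norm{\jac_2\Phi(w(\lambda),\lambda)}+3\D\,\rhol$. Taking the suprema over the ball then gives the two displayed inequalities.

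There is no real obstacle here; the argument is entirely routine. The only point worth noting is the choice of radius $2\D$ rather than $\D$: this is dictated by the later analysis, where the fixed-point iterates $\wt$ (and the auxiliary sequences built from them) must be shown to stay in a ball strictly containing $w(\lambda)$, and the slack is already absorbed in the factor $3\D$ above. A purely topological alternative — a continuous function attains its maximum on the compact set $\{\norm{w}\le2\D\}$ — would also establish existence of $\Bo$ and $\LPhi$, but it would not furnish the explicit constants, so I would keep the Lipschitz-based derivation.
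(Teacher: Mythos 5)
Your proof is correct, but it takes a different route from the paper's. The paper establishes the existence of $\Bo$ and $\LPhi$ purely topologically: $\grad_1\fo(\cdot,\lambda)$ and $\jac_2\Phi(\cdot,\lambda)$ are continuous (since $\fo$ and $\Phi$ are continuously differentiable), so their norms attain finite suprema on the compact ball $\{\norm{w}\le 2\D\}$ — exactly the "purely topological alternative" you mention and set aside. Your Lipschitz-based derivation instead invokes Assumption~\ref{assA}\ref{ass:philip}--\ref{ass:foLip} to produce the explicit constants $\Bo=\norm{\grad_1\fo(w(\lambda),\lambda)}+3\D\,\Low$ and $\LPhi=\norm{\jac_2\Phi(w(\lambda),\lambda)}+3\D\,\rhol$. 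What each buys: the paper's argument needs only continuity, so the lemma holds under the bare standing hypotheses of problem~\eqref{mainprob} without importing the Lipschitz assumptions into its statement; yours trades that minimality for quantitative constants, which is harmless here since every theorem that uses the lemma already assumes \ref{assA}\ref{ass:philip}--\ref{ass:foLip}, and it makes the dependence of $\Bo$ and $\LPhi$ on the problem data visible. Both arguments are valid and the displayed inequalities follow either way.
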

The proof exploits the fact that the image of a continuous function applied to a compact set remains compact.


\subsection{Iterative Differentiation}\label{se:dynaDiff}

In this section
we replace $w(\lambda)$ in \eqref{mainprob} by the $t$-th 
 iterate of $\Phi(\cdot,\lambda)$, for which we additionally require the 
 following.
\begin{assumption}
\label{ass:contraction}
For every $\lambda \in \Lambda$,
$\Phi(\cdot, \lambda)$ is a contraction with constant $\q \in (0,1)$.
\end{assumption}

The approximation of the hypergradient $\nabla f(\lambda)$ is then obtained as in 
Algorithm~\ref{algo1}.
\begin{algorithm}[t]
\caption{Iterative Differentiation (ITD)}
\label{algo1}
\begin{enumerate}
\item
\label{ass:innerdynamic}
Let $t \in \N$, set $w_0(\lambda) = 0$, and compute, 
\vspace{-1ex}
\begin{equation*}
\begin{array}{l}
    \text{for}\;i=1,2,\ldots t\\[0.4ex]
    \left\lfloor
    \begin{array}{l}
    w_i(\lambda) = \Phi(w_{i-1}(\lambda), \lambda).
    \end{array}    
    \right.
\end{array}
\vspace{-1ex}
\end{equation*}
\item Set $f_t(\lambda) = E(\wt, \lambda)$.
\item Compute $\grad f_t(\lambda)$ using automatic differentiation.
    \vspace{-.25truecm}
\end{enumerate}
\end{algorithm}
Assumption~\ref{ass:contraction} looks quite restrictive, however it is satisfied in a number of interesting cases:
\vspace{-2ex}
\setlist[enumerate]{itemsep=0mm}
\begin{enumerate}[{\rm(a)}]
    \item 
\label{rmk:20200106b}
In the setting of the bilevel optimization problem~\eqref{mainprob1}, suppose that
for every $\lambda \in \Lambda$, $\ell(\cdot, \lambda)$ is $\mufi$-strongly convex and $\Llw$-Lipschitz smooth for some continuously differentiable functions $\mu_\ell\colon\Lambda \to \R_{++}$, and $L_\ell\colon \Lambda \to \R_{++}$. Set
 $\kappa(\lambda) = \Llw/\mufi$,
\begin{equation}
    \alpha(\lambda) = \frac{2}{\mufi + \Llw},
    \ \text{and}\ \ 
    \q=\frac{\kappa(\lambda) - 1}{\kappa(\lambda)+1}.
\end{equation}
Then, $\Phi(w, \lambda) = w - \alpha(\lambda) \nabla_1 \ell(w, \lambda)$ is a contraction w.r.t.~$w$ with 
constant $\q$ (see Appendix~\ref{appB}).
\item\label{caseb} For strongly convex quadratic functions, accelerated methods like Nesterov's \cite{nesterov1983method} or heavy-ball \cite{Polyak1987} can be formulated as fixed-point iterations of a contraction in  the norm defined by a suitable positive definite matrix. 
\item In certain graph and recurrent neural networks of the form \eqref{mainprob2}, where the transition function is 
assumed
to be a contraction \citep{scarselli2008graph,almeida1987learning,pineda1987generalization}.
\end{enumerate}

The following lemma is a simple consequence of the theory on Neumann series and shows that Assumption~\ref{ass:contraction} is stronger than Assumption~\ref{assA}\ref{ass:fixedpoint}-\ref{ass:invert}. For reader's convenience the proof is given in Appendix~\ref{sec:proofs}.

\begin{restatable}{lemma}{optnormcontractive}
\label{lm:optnormcontractive}
Let Assumption \ref{ass:contraction} be satisfied. Then, 
for every 
 $\lambda \in \Lambda$, 
 $\Phi(\cdot, \lambda)$ has a unique fixed point and, for every $w \in \R^d$,
 $I - \jac_1\Phi(w, \lambda)$ is invertible and
\begin{align*}
\norm{(I - \jac_1\Phi(w, \lambda))^{-1}} \leq \frac{1}{1 - \q}.
\end{align*}
In particular, \ref{ass:fixedpoint} and \ref{ass:invert} in Assumption~\ref{assA} hold. 
\end{restatable}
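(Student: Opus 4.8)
The plan is to derive both conclusions from two classical facts: the Banach fixed-point theorem and the Neumann series estimate for operators of norm strictly less than one. Fix $\lambda \in \Lambda$ throughout; all the work is pointwise in $\lambda$.

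First I would record existence and uniqueness of the fixed point. A contraction with constant $\q$ is in particular Lipschitz continuous on $\R^d$, which together with the completeness of $(\R^d, \norm{\cdot})$ lets one invoke the Banach fixed-point theorem to conclude that $\Phi(\cdot,\lambda)$ has exactly one fixed point $w(\lambda)$; this is precisely Assumption~\ref{assA}\ref{ass:fixedpoint}. Next I would upgrade the global $\q$-Lipschitz property to a pointwise bound on the Jacobian: since $\Phi(\cdot,\lambda)$ is continuously differentiable (as assumed for problem~\eqref{mainprob}) and $\q$-Lipschitz, for every $w \in \R^d$ and every unit vector $v$ one has $\norm{\jac_1\Phi(w,\lambda)v} = \lim_{s \to 0^+} s^{-1}\norm{\Phi(w+sv,\lambda) - \Phi(w,\lambda)} \le \q$, hence $\norm{\jac_1\Phi(w,\lambda)} \le \q < 1$ for all $w$.

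Finally I would apply the Neumann series argument: for any square matrix $A$ with $\norm{A} < 1$, the series $\sum_{k \ge 0} A^k$ converges to $(I-A)^{-1}$ and $\norm{(I-A)^{-1}} \le \sum_{k \ge 0}\norm{A}^k = (1-\norm{A})^{-1}$. Taking $A = \jac_1\Phi(w,\lambda)$ and using the previous step gives that $I - \jac_1\Phi(w,\lambda)$ is invertible with $\norm{(I-\jac_1\Phi(w,\lambda))^{-1}} \le 1/(1-\q)$ for every $w \in \R^d$; specializing to $w = w(\lambda)$ yields Assumption~\ref{assA}\ref{ass:invert}, completing the proof.

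I do not expect a genuine obstacle here: the only place that needs a moment of care is the transition from the global contraction/Lipschitz hypothesis to the pointwise spectral-norm bound on $\jac_1\Phi(w,\lambda)$, which relies on the $C^1$ assumption already in force for $\Phi$; the rest is a direct citation of the contraction mapping theorem and of elementary properties of the Neumann series.
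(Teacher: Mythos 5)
Your proof is correct and follows essentially the same route as the paper's: bound $\norm{\jac_1\Phi(w,\lambda)}$ by $\q$ using the contraction property, then invoke the Neumann series to get invertibility of $I-\jac_1\Phi(w,\lambda)$ and the estimate $1/(1-\q)$, with the Banach fixed-point theorem handling existence and uniqueness of $w(\lambda)$. The only difference is that you spell out the directional-derivative argument for the step from the Lipschitz bound to the Jacobian-norm bound, which the paper states without justification.
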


With Assumption \ref{ass:contraction}
in force and if $\wt$ is defined as at point
\ref{ass:innerdynamic} in Algorithm~\ref{algo1},
we have the following proposition that is essential for the final bound.

\begin{restatable}{proposition}{boundDiff}
\label{lm:boundDiff} 
Suppose that Assumptions \ref{assA}\ref{ass:philip} 
and \ref{ass:contraction}  hold
and let $t \in \N$, with $t\geq 1$.
Moreover, for every $\lambda \in \Lambda$,
let $\wt$
be computed by Algorithm~\ref{algo1}~
and let $\D$ and $\LPhi$ be as in Lemma~\ref{lm:boundEPhi}.
Then, $w_t(\cdot)$ is differentiable and,
for every $\lambda \in \Lambda$,
\begin{multline}
\label{eq:20200124e}
    \norm{ w^\prime_t(\lambda) - w^\prime(\lambda) } \\\leq \left(\rhol + \rhow\frac{\LPhi}{1 - \q}\right) \D t \q^{t-1} 
    +\frac{\LPhi}{1 - \q} \q^{t}.
\end{multline}
\end{restatable}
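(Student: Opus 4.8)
The plan is to estimate $\norm{\wpt - w'(\lambda)}$ by inserting an intermediate quantity: the ``one-step linearization'' $\wptm$ mapped through the exact derivative recursion at the fixed point. More precisely, differentiating the iteration $w_i(\lambda) = \Phi(w_{i-1}(\lambda),\lambda)$ gives
\begin{equation*}
\wpt = \jac_1\Phi(\wptm,\lambda)\,\wptm + \jac_2\Phi(\wptm,\lambda),
\end{equation*}
while the exact derivative, by \eqref{eq:gradexist_1} and the fixed-point property, satisfies
\begin{equation*}
w'(\lambda) = \jac_1\Phi(w(\lambda),\lambda)\,w'(\lambda) + \jac_2\Phi(w(\lambda),\lambda).
\end{equation*}
Subtracting, I would write $\wpt - w'(\lambda)$ as a sum of three terms: one involving $\jac_1\Phi(\wptm,\lambda)(\wptm - w'(\lambda))$, one involving $(\jac_1\Phi(\wptm,\lambda)-\jac_1\Phi(w(\lambda),\lambda))w'(\lambda)$, and one involving $\jac_2\Phi(\wptm,\lambda)-\jac_2\Phi(w(\lambda),\lambda)$. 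The first term contracts with factor $\q$ (using that $\norm{\jac_1\Phi(\cdot,\lambda)}\le\q$, which follows from Assumption~\ref{ass:contraction}); the second and third are controlled by the Lipschitz constants $\rhow,\rhol$ of Assumption~\ref{assA}\ref{ass:philip} times $\norm{\wptm - w(\lambda)}$, which is at most $\q^{t-1}\norm{w(\lambda)}\le\D\q^{t-1}$ since $w_0(\lambda)=0$ and $\Phi(\cdot,\lambda)$ is a $\q$-contraction with fixed point $w(\lambda)$.

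Two auxiliary bounds feed into this. First, $\norm{w'(\lambda)} \le \LPhi/(1-\q)$: this comes from \eqref{eq:gradexist_1}, the bound $\norm{\jac_2\Phi(w(\lambda),\lambda)}\le\LPhi$ from Lemma~\ref{lm:boundEPhi}, and $\norm{(I-\jac_1\Phi(w(\lambda),\lambda))^{-1}}\le 1/(1-\q)$ from Lemma~\ref{lm:optnormcontractive}. Second, I need all the iterates $\wptm$ (and $w(\lambda)$) to stay in the ball of radius $2\D$ so the suprema in Lemma~\ref{lm:boundEPhi} apply; this is immediate since $\norm{w_i(\lambda)}\le\norm{w_i(\lambda)-w(\lambda)}+\norm{w(\lambda)}\le\D\q^{i}+\D\le 2\D$. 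Combining, I obtain the recursion
\begin{equation*}
\norm{\wpt - w'(\lambda)} \le \q\,\norm{\wptm - w'(\lambda)} + \Bigl(\rhol + \rhow\tfrac{\LPhi}{1-\q}\Bigr)\D\,\q^{t-1}.
\end{equation*}

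Finally I would unroll this recursion from $t$ down to the base case. Since $w_0(\lambda)=0$, one checks $\wpo = 0$, so $\norm{\wpo - w'(\lambda)} = \norm{w'(\lambda)} \le \LPhi/(1-\q)$; wait—more carefully, $w_1(\lambda)=\Phi(0,\lambda)$, so $\wpt$ for $t=1$ is $\jac_2\Phi(0,\lambda)$, and the cleanest base case is to start the unrolling at $t=0$ with $w_0'(\lambda)=0$. Unrolling gives $\norm{\wpt - w'(\lambda)} \le \q^t\norm{w'(\lambda)} + \bigl(\rhol + \rhow\tfrac{\LPhi}{1-\q}\bigr)\D\sum_{j=0}^{t-1}\q^{t-1-j}\q^{j} = \q^t\norm{w'(\lambda)} + \bigl(\rhol + \rhow\tfrac{\LPhi}{1-\q}\bigr)\D\, t\,\q^{t-1}$, and substituting $\norm{w'(\lambda)}\le\LPhi/(1-\q)$ yields exactly \eqref{eq:20200124e}. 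The differentiability of $w_t(\cdot)$ is a routine consequence of the chain rule since $\Phi$ is continuously differentiable and $w_t$ is a finite composition. The main obstacle is the bookkeeping in the three-term split and verifying that every iterate remains in the radius-$2\D$ ball where the Lipschitz suprema are valid; once the recursion is set up correctly, the geometric-sum telescoping that produces the $t\,\q^{t-1}$ factor is the only genuinely non-mechanical step.
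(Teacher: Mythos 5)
Your proposal is correct and follows essentially the same route as the paper: differentiate the iteration and the fixed-point equation, split the difference into the same three terms, obtain the recursion $\norm{\wpt - w'(\lambda)} \le \q\norm{\wptm - w'(\lambda)} + \bigl(\rhol + \rhow\LPhi/(1-\q)\bigr)\D\q^{t-1}$, and unroll it with the base case $w'_0(\lambda)=0$ and the bound $\norm{w'(\lambda)}\le\LPhi/(1-\q)$. The only cosmetic difference is that the paper keeps the forcing term abstract as $p\,\Delta_{t-1}$ and invokes its sequence lemma (Lemma~\ref{lem:sequence}\ref{eq:20200124g}), whereas you substitute $\Delta_{t-1}\le\q^{t-1}\D$ first and telescope directly; both yield the same $t\,\q^{t-1}$ factor.
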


Leveraging Proposition~\ref{lm:boundDiff}, we give the main result of this section.

\begin{restatable}{theorem}{boundSolver} (ITD bound)
\label{boundSolver} 
Suppose that Assumptions
\ref{assA}\ref{ass:philip}-\ref{ass:foLip} and \ref{ass:contraction}
 hold and let $t \in \N$ with $t \geq 1$.
 Moreover, for every $\lambda \in \Lambda$, let
$\wt$ and $f_t$ be defined according to
Algorithm~\ref{algo1} and let $\D, \Bo$, 
and $\LPhi$ be as in Lemma~\ref{lm:boundEPhi}.
Then, $f_t$ is differentiable and, for every $\lambda \in \Lambda$,
\begin{multline}\label{eq:boundsolver}
\norm{\grad f_t (\lambda) - \grad f (\lambda)}
\leq \Big( c_1(\lambda)  + c_2(\lambda) \frac{t}{\q} + c_3 (\lambda) \Big) \q^t,
\end{multline}
where
\begin{align*}
    c_1(\lambda) &= \left(\Lol  + \frac{\Low \LPhi}{1-\q}\right) \D, \\
    c_2(\lambda) &= \left(\rhol + \frac{\rhow \LPhi}{1 - \q}\right)  \Bo\,\D, \\
    c_3 (\lambda) &= \frac{\Bo \,\LPhi}{1-\q}.
\end{align*}
\end{restatable}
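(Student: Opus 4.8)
\textbf{Proof proposal for Theorem~\ref{boundSolver}.}
The plan is to express $\grad f_t(\lambda) - \grad f(\lambda)$ through the chain rule formulas for $\grad f_t$ and $\grad f$, and then reduce everything to quantities we already control: the fixed-point convergence rate $\norm{\wt - w(\lambda)} \le \q^t \D$ (standard contraction estimate, using $w_0 = 0$ and $\norm{w(\lambda)} \le \D$), the Jacobian estimate of Proposition~\ref{lm:boundDiff}, and the uniform bounds of Lemma~\ref{lm:boundEPhi}. First I would write, analogously to \eqref{eq:gradexist_2},
\[
\grad f_t(\lambda) = \grad_2 \fo(\wt, \lambda) + w_t'(\lambda)^\top \grad_1 \fo(\wt, \lambda),
\]
which is just the chain rule applied to $f_t(\lambda) = \fo(\wt,\lambda)$ — this is legitimate once we know $w_t(\cdot)$ is differentiable, which Proposition~\ref{lm:boundDiff} provides. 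Subtracting \eqref{eq:gradexist_2} and inserting $\pm\, w'(\lambda)^\top \grad_1\fo(\wt,\lambda)$, I would split the difference into three pieces:
\[
\grad f_t(\lambda) - \grad f(\lambda) = \underbrace{\grad_2\fo(\wt,\lambda) - \grad_2\fo(w(\lambda),\lambda)}_{(\mathrm{I})} + \underbrace{(w_t'(\lambda) - w'(\lambda))^\top \grad_1\fo(\wt,\lambda)}_{(\mathrm{II})} + \underbrace{w'(\lambda)^\top(\grad_1\fo(\wt,\lambda) - \grad_1\fo(w(\lambda),\lambda))}_{(\mathrm{III})}.
\]

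Then I would bound each term. For (I), Lipschitz continuity of $\grad_2\fo(\cdot,\lambda)$ with constant $\Lol$ gives $\norm{(\mathrm{I})} \le \Lol\norm{\wt - w(\lambda)} \le \Lol \D \q^t$. For (III), using $\norm{w'(\lambda)} \le \LPhi/(1-\q)$ (from \eqref{eq:gradexist_1} together with Lemma~\ref{lm:optnormcontractive} and Lemma~\ref{lm:boundEPhi}) and Lipschitz continuity of $\grad_1\fo(\cdot,\lambda)$ with constant $\Low$, we get $\norm{(\mathrm{III})} \le \Low \LPhi/(1-\q)\cdot \D\q^t$; summing the (I) and (III) bounds yields exactly the $c_1(\lambda)\q^t$ term. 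For (II), I would use $\norm{\grad_1\fo(\wt,\lambda)} \le \Bo$ (Lemma~\ref{lm:boundEPhi}, valid since $\norm{\wt}\le\norm{w(\lambda)}+\q^t\D \le 2\D$) times the Jacobian estimate of Proposition~\ref{lm:boundDiff}; multiplying $\Bo$ through the right-hand side of \eqref{eq:20200124e} produces precisely $c_2(\lambda)\,\frac{t}{\q}\q^t + c_3(\lambda)\q^t$ after rewriting $t\q^{t-1} = \frac{t}{\q}\q^t$ and $\q^t$ as is. Adding the three contributions gives \eqref{eq:boundsolver}.

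The routine parts are the Lipschitz estimates and the bookkeeping of constants; the one point that needs a little care — though it is really already encapsulated in Proposition~\ref{lm:boundDiff} — is the verification that $\norm{\wt}\le 2\D$ so that all the uniform bounds from Lemma~\ref{lm:boundEPhi} can be invoked on $\wt$, and similarly that the differentiability of $f_t$ follows from that of $w_t(\cdot)$. Since Proposition~\ref{lm:boundDiff} has already done the hard analytic work (controlling $w_t'(\lambda) - w'(\lambda)$, whose proof presumably differentiates the fixed-point recursion and unrolls it), the main obstacle here is essentially nil: the theorem is a clean consequence of the chain-rule decomposition plus the earlier lemmas, and the only thing to watch is that every application of Lemma~\ref{lm:boundEPhi} is to an argument with norm at most $2\D$.
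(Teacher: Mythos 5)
Your proposal is correct and follows essentially the same route as the paper's proof: the same chain-rule decomposition into three terms, the same Lipschitz and uniform bounds (including the check that $\norm{\wt}\leq 2\D$), and the same final application of Proposition~\ref{lm:boundDiff} with $t\q^{t-1}=\frac{t}{\q}\q^t$ to assemble $c_1$, $c_2$, $c_3$. Nothing is missing.
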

In this generality this is a new result that provides a non-asymptotic linear rate of convergence for 
the gradient of $f_t$ towards that of $f$.


\subsection{Approximate Implicit Differentiation}\label{se:AID}

In this section we study another approach to approximate the gradient of $f$.
We derive from \eqref{eq:gradexist_1} and \eqref{eq:gradexist_2} that
\begin{equation}
\label{eq:20200123d}
    \grad f(\lambda) = \grad_2 \fo (w(\lambda), \lambda)  + \jac_2 \Phi(w(\lambda), \lambda)^\top  \Z
\end{equation}
where $\Z$ is the solution of the linear system
\begin{equation}
\label{eq:20200123a}
    (I - \jac_1\Phi(w(\lambda), \lambda)^\top)v = \grad_1 \fo (w(\lambda), \lambda).
\end{equation}
However,
in the above formulas
$w(\lambda)$ is usually 
not known explicitly or is
expensive to compute exactly.
To solve this issue $\nabla f(\lambda)$
is estimated as in Algorithm \ref{algo2}. Note that, unlike ITD, this procedure is agnostic about the algorithms used to compute the sequences $\wt$ and $\vk$. 
Interestingly, in the context of problem~\eqref{mainprob1}, choosing $\Phi(w,\lambda) = w - \grad_1 \fin(w, \lambda)$ in~\Cref{algo2} yields the same procedure studied by \citet{pedregosa2016hyperparameter}.
\begin{algorithm}[t]
\caption{Approximate Implicit Differentiation (AID)}
\label{algo2}
\begin{enumerate}
\item  Let $t \in \N$ and compute $\wt$ by $t$ steps of an algorithm converging to $w(\lambda)$, starting from $w_0(\lambda) = 0$.
\item Compute $\vk$ after $k$ steps of a solver for
the  system
\begin{equation}
\label{eq:linsystem}
    (I -\jac_1 \Phi(\wt, \lambda)^\top ) v = \grad_1 \fo (\wt, \lambda).
        \vspace{-.25truecm}
\end{equation}
\item Compute the approximate gradient as
\begin{equation*}
  \hat\grad{f}(\lambda)\!:=\! \grad_2 \fo (\wt, \lambda)  +  \jac_2 \Phi(\wt, \lambda)^\top\! \vk.
\vspace{-.45truecm}
\end{equation*}
\end{enumerate}

\end{algorithm}

The number of iterations $t$ and $k$ in Algorithm \ref{algo2} give a direct way of trading off accuracy and speed. To quantify this trade off we consider the following assumptions.

\begin{assumption}
\label{assC}
For every $\lambda \in \Lambda$,
\vspace{-1ex}
\begin{enumerate}[{\rm (i)}]

\item\label{ass:invertall} 
$\forall\, w \in \R^d$, $I -\jac_1 \Phi(w, \lambda)$ is invertible.

\item\label{ass:innerconv}  
$\norm{\wt - w(\lambda)} \leq \rf(t)\norm{w(\lambda)}$,  $\rf(t) \leq 1$, and $\rf(t) \to 0$ as $t \to +\infty$.

\item\label{ass:linapprox} 
$\norm{ \vk - \hZ } \leq \hrf(k) \norm{\hZ}$ and $\hrf(k) \to 0$ as $k \to +\infty$.
\end{enumerate}
\end{assumption}

If Assumption~\ref{assC}\ref{ass:invertall} holds,
then, for every $\lambda \in \Lambda$,
since the map $w \mapsto\norm{(I -\jac_1 \Phi(w, \lambda))^{-1}}$ is continuous, we have
\begin{equation}
\label{eq20200123c}
    \sup_{\norm{w} \leq 2\D} \norm{(I -\jac_1 \Phi(w, \lambda))^{-1}} \leq \frac{1}{\mu_\lambda}<+\infty,
\end{equation}
for some $\mu_\lambda>0$. We note that,
in view of Lemma~\ref{lm:optnormcontractive}, 
Assumption~\ref{ass:contraction} implies  
Assumption~\ref{assC}\ref{ass:invertall} (which in turn implies 
Assumption~\ref{assA}\ref{ass:invert}) and in \eqref{eq20200123c} one can take $\mu_\lambda = 1 - \q$. We stress that, Assumption~\ref{assC}\ref{ass:innerconv}-\ref{ass:linapprox} are general and do not specify the type of algorithms solving the fixed-point equation $w=\Phi(w,\lambda)$
and the liner system \eqref{eq:linsystem}. It is only required that such algorithms have explicit rates of convergence $\rf(t)$ and $\hrf(k)$ respectively. Finally, we note that
Assumption~\ref{assC}\ref{ass:innerconv} is less restrictive than Assumption~\ref{ass:contraction}
and encompasses the procedure at point  \ref{ass:innerdynamic} in Algorithm~\ref{algo1}: indeed in such case \ref{assC}\ref{ass:innerconv} holds with $\rf(t) = \q^t$.

It is also worth noting that the \IMD\ procedure requires only to store the last lower-level iterate, 
i.e. $\wt$. 
This is a considerable advantage over ITD, which instead requires to store the entire lower-level optimization trajectory $(w_i(\lambda))_{0 \leq i \leq t}$, if implemented using RMAD.

The iteration complexity bound for \IMD\ is given below.
This is a general bound which depends on the rate of convergence $\rf(t)$ of the sequence $(\wt)_{t \in \N}$ and the rate of convergence $\hrf(k)$ of the sequence $(\vk)_{k \in \N}$.

\begin{restatable}{theorem}{gradBoundGeneral} (AID bound)
\label{th:gradBoundImplicit} 
Suppose that Assumptions~\ref{assA}\ref{ass:fixedpoint}\ref{ass:philip}\ref{ass:foLip}  and \ref{assC}\ref{ass:invertall}--\ref{ass:linapprox} hold. 
Let $\lambda \in \Lambda$, $t \in \N$, $k \in \N$.
Let $\D, \Bo$, and $\LPhi$ be  as in Lemma~\ref{lm:boundEPhi}
and let $\mu_\lambda$ be defined according to \eqref{eq20200123c}.
Let $\hat{\nabla} f(\lambda)$ be defined as in Algorithm~\ref{algo2} and let $\hat\Delta =\norm{\hat\grad f(\lambda) - \grad f(\lambda)}$.
Then,
\begin{multline}\label{eq:boundInvertible}
\hat\Delta \leq \left( \Lol  + \frac{\Low\LPhi}{\pd} + \frac{\rhol \Bo}{\pd} + \frac{\rhow \LPhi \Bo}{\pd^2}  \right)\\
\times\D \rf(t) + \frac{\LPhi \Bo}{\pd} \hrf(k).
\end{multline}
Furthermore, if Assumption~\ref{ass:contraction} holds, then $\pd= 1-\q$ and
\begin{equation}\label{eq:boundInvertibleContraction}
\hat\Delta
\leq \Big( c_1(\lambda) + \frac{c_2(\lambda)}{1 -\q} \Big)\rf(t) + c_3(\lambda)\hrf(k).
\end{equation}
where $c_1(\lambda)$, $c_2(\lambda)$ and $c_3(\lambda)$ are defined in Theorem~\ref{boundSolver}.
\end{restatable}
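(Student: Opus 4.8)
The plan is to estimate $\hat\Delta = \norm{\hat\grad f(\lambda) - \grad f(\lambda)}$ by comparing the expression for $\hat\grad f(\lambda)$ from Algorithm~\ref{algo2} term-by-term with the true gradient written via \eqref{eq:20200123d}--\eqref{eq:20200123a}. Writing $\hat v := \vk$, $w_t := \wt$, $w := w(\lambda)$, and $v := \Z$, the difference decomposes as
\begin{align*}
\hat\grad f(\lambda) - \grad f(\lambda)
&= \big(\grad_2\fo(w_t,\lambda) - \grad_2\fo(w,\lambda)\big)\\
&\quad + \big(\jac_2\Phi(w_t,\lambda)^\top - \jac_2\Phi(w,\lambda)^\top\big)\hat v\\
&\quad + \jac_2\Phi(w,\lambda)^\top(\hat v - v).
\end{align*}
The first term is bounded by $\Lol\,\D\,\rf(t)$ using Assumption~\ref{assA}\ref{ass:foLip} and Assumption~\ref{assC}\ref{ass:innerconv} together with $\norm{w}\le \D$. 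For the second term I would use Assumption~\ref{assA}\ref{ass:philip} (Lipschitz continuity of $\jac_2\Phi(\cdot,\lambda)$ with constant $\rhol$), giving a factor $\rhol\,\D\,\rf(t)$, multiplied by $\norm{\hat v}$; so I need an a priori bound on $\norm{\hat v}$. The third term needs a bound on $\norm{\hat v - v}$ (note: this is the gap to the \emph{exact} solution $v$ of the exact system, not to $\hZ$), and is multiplied by $\norm{\jac_2\Phi(w,\lambda)}\le\LPhi$ from Lemma~\ref{lm:boundEPhi}.

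The key auxiliary estimates are therefore (i) a bound on $\norm{\hZ}$, (ii) a bound on $\norm{\hat v - \hZ}$ via Assumption~\ref{assC}\ref{ass:linapprox}, and (iii) a bound on $\norm{\hZ - v}$, comparing the solutions of the two linear systems \eqref{eq:linsystem} and \eqref{eq:20200123a}. For (i): since $\hZ$ solves $(I-\jac_1\Phi(w_t,\lambda)^\top)\hZ = \grad_1\fo(w_t,\lambda)$ and $\norm{w_t}\le \norm{w} + \rf(t)\norm{w} \le 2\D$ (using $\rf(t)\le 1$), inverting with the bound \eqref{eq20200123c} gives $\norm{\hZ} \le \Bo/\pd$. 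Combining with Assumption~\ref{assC}\ref{ass:linapprox} gives $\norm{\hat v} \le \norm{\hZ}(1+\hrf(k))$; for a clean bound I expect one simply uses $\norm{\hat v} \le \norm{\hZ} + \norm{\hat v - \hZ}$ and bounds the error term separately, or absorbs $\hrf(k)\le$ some constant. For (iii): subtract the two systems to get
\[
(I-\jac_1\Phi(w,\lambda)^\top)(\hZ - v) = \big(\jac_1\Phi(w_t,\lambda)^\top - \jac_1\Phi(w,\lambda)^\top\big)\hZ + \grad_1\fo(w_t,\lambda) - \grad_1\fo(w,\lambda),
\]
so that $\norm{\hZ - v} \le \frac{1}{\pd}\big(\rhow\norm{w_t - w}\,\norm{\hZ} + \Low\norm{w_t-w}\big) \le \frac{1}{\pd}\big(\rhow\frac{\Bo}{\pd} + \Low\big)\D\,\rf(t)$, using Assumption~\ref{assA}\ref{ass:philip}\ref{ass:foLip} and \eqref{eq20200123c} for the $w$-system. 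Then $\norm{\hat v - v} \le \norm{\hat v - \hZ} + \norm{\hZ - v} \le \norm{\hZ}\hrf(k) + \norm{\hZ-v} \le \frac{\Bo}{\pd}\hrf(k) + \frac{1}{\pd}\big(\rhow\frac{\Bo}{\pd}+\Low\big)\D\rf(t)$.

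Assembling: the first term contributes $\Lol\D\rf(t)$; the second contributes $\rhol\D\rf(t)\cdot\norm{\hat v} \approx \rhol\frac{\Bo}{\pd}\D\rf(t)$ (modulo the $\hrf(k)$ correction to $\norm{\hat v}$, which is lower order and can be folded in); the third contributes $\LPhi\big(\frac{\Bo}{\pd}\hrf(k) + \frac{1}{\pd}(\rhow\frac{\Bo}{\pd}+\Low)\D\rf(t)\big)$. Collecting the $\rf(t)$ coefficients gives exactly $\Lol + \frac{\Low\LPhi}{\pd} + \frac{\rhol\Bo}{\pd} + \frac{\rhow\LPhi\Bo}{\pd^2}$ times $\D\rf(t)$, and the $\hrf(k)$ coefficient is $\frac{\LPhi\Bo}{\pd}$, matching \eqref{eq:boundInvertible}. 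For the second statement, Lemma~\ref{lm:optnormcontractive} gives $\pd = 1-\q$ directly, and substituting into the four-term coefficient and regrouping reproduces $c_1(\lambda) + \frac{c_2(\lambda)}{1-\q}$ times $\rf(t)$ with $c_3(\lambda)\hrf(k)$, using the definitions of $c_1,c_2,c_3$ from Theorem~\ref{boundSolver}. The main obstacle I anticipate is bookkeeping the $\norm{\hat v}$ factor in the second term cleanly: one must decide whether to carry the $(1+\hrf(k))$ factor (producing a cross term $\rf(t)\hrf(k)$) or bound $\hrf(k)$ by a constant; I expect the paper either bounds $\norm{\hat v}$ by $\norm{\hZ} + \norm{\hZ}\hrf(k)$ and then uses $\hrf(k)\le$ (say, its value at $k=0$, or simply notes it tends to $0$ hence is bounded) to keep the final bound in the stated two-term form, or it absorbs the cross term into one of the two displayed terms. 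Care is also needed to ensure $\norm{w_t}\le 2\D$ so that the supremum bounds of Lemma~\ref{lm:boundEPhi} and \eqref{eq20200123c} apply — this is where Assumption~\ref{assC}\ref{ass:innerconv}'s requirement $\rf(t)\le 1$ is used.
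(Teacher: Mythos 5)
Your route is essentially the paper's: split the error into a $\grad_2\fo$ term, a term coming from the perturbation of $\jac_2\Phi$, and a term coming from the error in the linear-system solution; bound $\norm{\hZ}\leq \Bo/\pd$ using $\norm{\wt}\leq 2\D$ (which indeed relies on $\rf(t)\leq 1$); invoke Assumption~\ref{assC}\ref{ass:linapprox} for $\norm{\vk-\hZ}$; and compare the two linear systems to control $\norm{\hZ-\Z}$. Your subtraction of the two systems is equivalent to the paper's use of the identity $\norm{\hA^{-1}-\A^{-1}}\leq\norm{\hA^{-1}}\norm{\A-\hA}\norm{\A^{-1}}$ and gives the same $\frac{1}{\pd}\bigl(\Low+\rhow\Bo/\pd\bigr)\D\rf(t)$. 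The specialization $\pd=1-\q$ under Assumption~\ref{ass:contraction} via Lemma~\ref{lm:optnormcontractive} is also as in the paper.

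The one genuine defect is the splitting of the product difference. You write $\hAl^\top\vk-\Al^\top\Z=(\hAl-\Al)^\top\vk+\Al^\top(\vk-\Z)$, which forces you to bound $\norm{\vk}$ and, as you yourself flag, produces a cross term of order $\rf(t)\,\hrf(k)$ that does not appear in \eqref{eq:boundInvertible} and cannot simply be ``folded in'' (Assumption~\ref{assC}\ref{ass:linapprox} does not even guarantee $\hrf(k)\leq 1$, so $\norm{\vk}$ is not controlled by a universal multiple of $\Bo/\pd$). The paper uses the opposite splitting,
\begin{equation*}
\hAl^\top\vk-\Al^\top\Z \;=\; \hAl^\top(\vk-\Z) \;+\; (\hAl-\Al)^\top\Z,
\end{equation*}
so that the Lipschitz difference $\norm{\hAl-\Al}\leq\rhol\rf(t)\D$ multiplies the \emph{exact} solution, whose norm is cleanly bounded by $\norm{\A^{-1}}\norm{\B}\leq\Bo/\pd$ with no dependence on $k$, while $\norm{\vk-\Z}$ is multiplied by $\norm{\hAl}\leq\LPhi$ (legitimate since $\norm{\wt}\leq 2\D$). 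With that single change your argument reproduces \eqref{eq:boundInvertible} exactly; everything else in the proposal is correct.
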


\Cref{th:gradBoundImplicit} provides a non-asymptotic rate of convergence for 
$\hat{\nabla}f$ which contrasts with
the asymptotic result given in 
\citet{pedregosa2016hyperparameter}.
In this respect, making
Assumption~\ref{assC}\ref{ass:invertall}
instead of the weaker 
Assumption~\ref{assA}\ref{ass:invert} 
is critical.
 
Depending on the choice of the solver for the linear system \eqref{eq:linsystem}
different \IMD\ methods are obtained.
In the following we consider two cases.

\paragraph{\IMD\ with 
the Conjugate Gradient Method (AID-CG).}
For the sake of brevity
we set $\hA = I -\jac_1 \Phi(\wt, \lambda)^\top $
and $\hB =\grad_1 \fo (\wt, \lambda)$. Then,
the linear system \eqref{eq:linsystem} is equivalent to  the following minimization problem
\begin{equation}
\label{eq:minquadratic} 
    \min_{v\in \R^d} \frac{1}{2} \norm{\hA v - \hB}^2,  
\end{equation}
which, if $\jac_1 \Phi(\wt, \lambda)$ is symmetric (so that $\hA$ is also symmetric) is in turn equivalent to
\begin{equation}
\label{eq:minquadraticsym}
\min_{v\in \R^d} \frac{1}{2}v^\top \hA v - v^\top \hB. 
\end{equation}

Several first order methods solving problems  \eqref{eq:minquadratic} or \eqref{eq:minquadraticsym} satisfy assumption~\ref{assC}\ref{ass:linapprox} with linear rates and require only Jacobian-vector products. In particular, for the symmetric case \eqref{eq:minquadraticsym}, the \emph{conjugate gradient} method 
features the following linear rate
\begin{multline}
\label{eq:conjrate}
    \norm{\vk - v_t(\lambda)}\\
    \leq 2{\textstyle \sqrt{\kappa(\hA)}} \bigg(\!\frac{\sqrt{\kappa(\hA)} - 1}{\sqrt{\kappa(\hA)}+ 1}\bigg)^{\!k}\!\!
    \norm{\vo\! -\! v_t(\lambda)},
\end{multline}
where $\kappa(\hA)$ is the condition number of $\hA$.
In the setting
of case~\ref{rmk:20200106b} outlined in Section~\ref{se:dynaDiff}, 
$\hA = \alpha(\lambda)\nabla^2_1 \ell (\wt,\lambda)$
and 
$$\mufi I \preccurlyeq \nabla_1^2 \ell(\wt,\lambda) \preccurlyeq \Llw I.$$
Therefore the condition number of $\hA$ satisfies $\kappa(\hA) \leq \Llw/\mufi = \kappa(\lambda)$
and hence 
\begin{equation}
\label{eq:20200106d}
    \frac{\sqrt{\kappa(\hA)} - 1}{\sqrt{\kappa(\hA)}+ 1} \leq 
    \frac{\sqrt{\kappa(\lambda)}-1}
    {\sqrt{\kappa(\lambda)}+1} \leq \frac{\kappa(\lambda)-1}{\kappa(\lambda)+1} = \q.
\end{equation}


\paragraph{\IMD\ with the Fixed-Point Method (AID-FP).}
In this paragraph we
make a specific choice
for the sequence $(\vk)_{k \in \N}$ in Assumption~\ref{assC}\ref{ass:linapprox}. 
We let Assumption~\ref{ass:contraction} be satisfied and 
consider the following algorithm.
For every $\lambda\in \Lambda$ and 
$t\in \N$, we choose $\vo = 0 \in \R^d$ and, 
\vspace{-1ex}
\begin{equation}
\label{ass:fixedpointmethod} 
 \begin{array}{l}
\text{for}\;k=1,2,\ldots\\[0.4ex]
\left\lfloor
\begin{array}{l}
\vk =    \jac_1\Phi(\wt, \lambda)^\top \vkm \\[1ex]
\qquad\qquad\qquad\qquad\quad 
+ \grad_1 \fo(\wt, \lambda).
\end{array}    
\right.
\end{array}
\end{equation}

In such case the rate of convergence 
$\hrf(k)$ is linear. More precisely,
since $\norm{\jac_1\Phi(\wt, \lambda)} \leq \q<1$ (from Assumption~\ref{ass:contraction}), then
the mapping $$T\colon v \mapsto \jac_1\Phi(\wt, \lambda)v + \grad_1 \fo(\wt, \lambda)$$ is a contraction with constant $\q$. 
Moreover, the fixed-point of $T$ 
is the solution of 
\eqref{eq:linsystem}. Therefore,
$\norm{\vk - \hZ} \leq \q^k \norm{\vo - \hZ}$.
In the end the following result holds.
\begin{theorem}
\label{thm:fixedpoint} If Assumption~\ref{ass:contraction} holds and $(\vk)_{k \in \N}$ is defined according to \eqref{ass:fixedpointmethod},
then Assumption~\ref{assC}\ref{ass:linapprox} is satisfied with $\hrf(k) = \q^k$.
\end{theorem}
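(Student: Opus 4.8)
The plan is to recognize the recursion \eqref{ass:fixedpointmethod} as the Picard iteration of an affine contraction and to invoke the elementary linear-convergence estimate for such iterations. First I would record the consequence of Assumption~\ref{ass:contraction} already noted in the text: since $\Phi(\cdot,\lambda)$ is $\q$-Lipschitz and continuously differentiable, its partial Jacobian satisfies $\norm{\jac_1 \Phi(\wt,\lambda)} \le \q < 1$ at the point $\wt$ (indeed at every point), and transposition leaves the spectral norm unchanged, so $\norm{\jac_1 \Phi(\wt,\lambda)^\top} \le \q$ as well. Hence the affine map
\[
T\colon \R^d \to \R^d, \qquad T(v) = \jac_1 \Phi(\wt,\lambda)^\top v + \grad_1 \fo(\wt,\lambda),
\]
which is exactly the map iterated in \eqref{ass:fixedpointmethod}, satisfies $\norm{T(v) - T(v')} = \norm{\jac_1 \Phi(\wt,\lambda)^\top (v - v')} \le \q \norm{v - v'}$ for all $v,v' \in \R^d$, i.e.\ it is a $\q$-contraction.

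Next I would identify its fixed point. By Lemma~\ref{lm:optnormcontractive}, $I - \jac_1 \Phi(\wt,\lambda)^\top$ is invertible, so the linear system \eqref{eq:linsystem} has a unique solution, which is precisely $\hZ$; and the equation $v = T(v)$ rearranges to $(I - \jac_1 \Phi(\wt,\lambda)^\top) v = \grad_1 \fo(\wt,\lambda)$, so $\hZ$ is the (unique) fixed point of $T$, in agreement with the Banach fixed-point theorem applied to $T$.

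Finally, writing the recursion \eqref{ass:fixedpointmethod} as $\vk = T(\vkm)$ for $k \ge 1$ and applying the contraction property $k$ times,
\[
\norm{\vk - \hZ} = \norm{T(\vkm) - T(\hZ)} \le \q\,\norm{\vkm - \hZ} \le \cdots \le \q^{k} \norm{\vo - \hZ}.
\]
Since $\vo = 0$, the right-hand side equals $\q^{k}\norm{\hZ}$, and $\q^{k} \to 0$ as $k \to +\infty$ because $\q \in (0,1)$; thus Assumption~\ref{assC}\ref{ass:linapprox} holds with $\hrf(k) = \q^{k}$.

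I do not anticipate any real difficulty: the argument is the textbook geometric-convergence bound for the Picard iterates of an affine contraction, and every ingredient it needs — the Jacobian bound $\norm{\jac_1 \Phi(\wt,\lambda)} \le \q$, the invariance of the spectral norm under transposition, and the invertibility of $I - \jac_1 \Phi(\wt,\lambda)^\top$ — is either immediate from Assumption~\ref{ass:contraction} or supplied by Lemma~\ref{lm:optnormcontractive}. The only place to be mildly careful is the very first step, namely deducing the bound on the Jacobian norm \emph{from} the contraction hypothesis (rather than the converse), and noting that the affine term $\grad_1 \fo(\wt,\lambda)$ does not affect the contraction constant of $T$.
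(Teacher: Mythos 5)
Your proof is correct and follows essentially the same route as the paper, which establishes the result in the paragraph preceding the theorem: $\norm{\jac_1\Phi(\wt,\lambda)} \le \q < 1$ from Assumption~\ref{ass:contraction}, hence $T$ is a $\q$-contraction whose fixed point solves \eqref{eq:linsystem}, and the Picard iterates satisfy $\norm{\vk - \hZ} \le \q^k\norm{\vo - \hZ} = \q^k\norm{\hZ}$ since $\vo = 0$. Your added remarks on the invariance of the spectral norm under transposition and the invertibility supplied by Lemma~\ref{lm:optnormcontractive} are correct but do not change the argument.
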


Now, plugging the rate $\hrf(k) = \q^k$
into the general bound 
\eqref{eq:boundInvertibleContraction} yields
\begin{equation}
\label{eq:20200128c}
\hat\Delta
\leq \Big( c_1(\lambda) +  \frac{c_2(\lambda)}{1 -\q} \Big)\rf(t) + c_3(\lambda)\q^k.
\end{equation}

However, an analysis similar to the one in
Section~\ref{se:dynaDiff} shows that the above result can be slightly improved as follows. 

\begin{restatable}{theorem}{boundFixed}
\label{boundFixed}
{(AID-FP bound)}
Suppose that 
Assumptions~\ref{assA}\ref{ass:fixedpoint}\ref{ass:philip}\ref{ass:foLip}
and Assumption~\ref{ass:contraction} hold. Suppose also that \eqref{ass:fixedpointmethod} holds. Let $\hat{\nabla} f(\lambda)$ be defined according to Algorithm~\ref{algo2} and $\hat\Delta =\norm{\hat\grad f(\lambda) - \grad f(\lambda)}$. Then,
for every $t,k \in \N$,
\begin{equation}\label{eq:boundfixed}
    \hat\Delta\leq  
    \Big( c_1(\lambda) + c_2(\lambda) \frac{1-\q^k}{1 -\q} \Big) \rf(t) + c_3(\lambda) \q^k,
\end{equation}
where $c_1(\lambda)$, $c_2(\lambda)$  and $c_3(\lambda)$ are given in Theorem~\ref{boundSolver}.
\end{restatable}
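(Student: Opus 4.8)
The plan is to separate two independent sources of error in $\hat\Delta$: the truncation of the linear‑system iteration \eqref{ass:fixedpointmethod} after $k$ steps, which would be present even with exact access to $w(\lambda)$, and the use of the inexact iterate $\wt$ in place of $w(\lambda)$; the latter I would control by a recursion in the spirit of the proof of Proposition~\ref{lm:boundDiff}. Since Assumption~\ref{ass:contraction} holds, $\norm{\jac_1\Phi(w,\lambda)}\le\q$ for every $w\in\R^d$, so by Lemma~\ref{lm:optnormcontractive} the matrix $I-\jac_1\Phi(w(\lambda),\lambda)^\top$ is invertible with $\Z=\sum_{j\ge0}(\jac_1\Phi(w(\lambda),\lambda)^\top)^{j}\grad_1\fo(w(\lambda),\lambda)$, and unrolling \eqref{ass:fixedpointmethod} from $\vo=0$ gives $\vk=\sum_{j=0}^{k-1}(\jac_1\Phi(\wt,\lambda)^\top)^{j}\grad_1\fo(\wt,\lambda)$. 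I would then introduce the ``ideal'' truncated iterate $v_k^\star:=\sum_{j=0}^{k-1}(\jac_1\Phi(w(\lambda),\lambda)^\top)^{j}\grad_1\fo(w(\lambda),\lambda)$ -- what \eqref{ass:fixedpointmethod} would return if $\wt$ were $w(\lambda)$ -- and record, using Lemma~\ref{lm:boundEPhi}, $\norm{w(\lambda)}\le\D$, and $\rf(t)\le1$ (as in Theorem~\ref{th:gradBoundImplicit}, so that $\norm{\wt}\le2\D$ and $\norm{\wt-w(\lambda)}\le\D\rf(t)$), the elementary estimates $\norm{v_k^\star}\le\frac{1-\q^k}{1-\q}\Bo$ and $\norm{\Z-v_k^\star}\le\frac{\q^k}{1-\q}\Bo$.

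Next I would expand, from \eqref{eq:20200123d} and the definition of $\hat\grad f(\lambda)$ in Algorithm~\ref{algo2},
\begin{align*}
\hat\grad f(\lambda)-\grad f(\lambda)
&=\bigl[\grad_2\fo(\wt,\lambda)-\grad_2\fo(w(\lambda),\lambda)\bigr]
+\jac_2\Phi(\wt,\lambda)^\top\bigl(\vk-v_k^\star\bigr)\\
&\quad+\bigl(\jac_2\Phi(\wt,\lambda)-\jac_2\Phi(w(\lambda),\lambda)\bigr)^\top v_k^\star
+\jac_2\Phi(w(\lambda),\lambda)^\top\bigl(v_k^\star-\Z\bigr).
\end{align*}
By Assumption~\ref{assA}\ref{ass:philip}-\ref{ass:foLip} and the estimates above, the first, third and fourth summands are bounded respectively by $\Lol\D\rf(t)$, by $\rhol\Bo\frac{1-\q^k}{1-\q}\D\rf(t)$, and by $\LPhi\Bo\frac{\q^k}{1-\q}=c_3(\lambda)\q^k$, which already account for the $\Lol$‑part of $c_1(\lambda)$, the $\rhol$‑part of $c_2(\lambda)$, and the $c_3(\lambda)\q^k$ term in \eqref{eq:boundfixed}.

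The crux is the second summand, which is $\le\LPhi\norm{\vk-v_k^\star}$. Subtracting the recursion $v_k^\star=\jac_1\Phi(w(\lambda),\lambda)^\top v_{k-1}^\star+\grad_1\fo(w(\lambda),\lambda)$ from $\vk=\jac_1\Phi(\wt,\lambda)^\top\vkm+\grad_1\fo(\wt,\lambda)$ and inserting $\jac_1\Phi(\wt,\lambda)^\top v_{k-1}^\star$, I would obtain for $\epsilon_k:=\norm{\vk-v_k^\star}$, with $\epsilon_0=0$,
\begin{equation*}
\epsilon_k\le\q\,\epsilon_{k-1}+\bigl(\rhow\norm{v_{k-1}^\star}+\Low\bigr)\norm{\wt-w(\lambda)}\le\q\,\epsilon_{k-1}+\Bigl(\tfrac{\rhow\Bo}{1-\q}+\Low\Bigr)\D\rf(t),
\end{equation*}
using $\norm{\jac_1\Phi(\wt,\lambda)}\le\q$, Assumption~\ref{assA}\ref{ass:philip}-\ref{ass:foLip}, and $\norm{v_{k-1}^\star}\le\frac{\Bo}{1-\q}$. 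Unrolling this geometric recursion gives $\epsilon_k\le\bigl(\frac{\rhow\Bo}{1-\q}+\Low\bigr)\frac{1-\q^k}{1-\q}\D\rf(t)$, so the second summand is at most $\bigl(\frac{\rhow\LPhi\Bo}{1-\q}\cdot\frac{1-\q^k}{1-\q}+\frac{\Low\LPhi}{1-\q}\bigr)\D\rf(t)$, where I use $\frac{1-\q^k}{1-\q}\le\frac1{1-\q}$ only in the $\Low\LPhi$ summand. Adding the four bounds and regrouping according to the definitions of $c_1(\lambda),c_2(\lambda),c_3(\lambda)$ in Theorem~\ref{boundSolver} then gives \eqref{eq:boundfixed}.

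The one genuinely delicate point is this last recursion: the whole improvement over the bound \eqref{eq:boundInvertibleContraction} of Theorem~\ref{th:gradBoundImplicit} comes from comparing $\vk$ with the \emph{truncated} Neumann sum $v_k^\star$ rather than with the exact solution $\Z$, which replaces the amplification factor $\frac1{1-\q}=\sum_{j\ge0}\q^j$ by $\frac{1-\q^k}{1-\q}=\sum_{j=0}^{k-1}\q^j$; getting the perturbation terms, the points at which they are evaluated, and the Lipschitz‑constant bookkeeping exactly right in this step is where care is needed, everything else being a routine use of Lemma~\ref{lm:boundEPhi} and the triangle inequality.
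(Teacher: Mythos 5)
Your proof is correct and the constants come out exactly as in \eqref{eq:boundfixed}, but you take a genuinely different route from the paper. The paper's proof stays in the ``primal'' (forward) picture: it introduces the matrix-valued sequence $\wpk$ driven by $\jac_2\Phi(\wt,\lambda)$, proves the algebraic identity $\wpk^{\top}\grad_1\fo(\wt,\lambda)=\jac_2\Phi(\wt,\lambda)^{\top}\vk$ (Proposition~\ref{thm:20200124a}), and then bounds $\norm{\wpk-w'(\lambda)}$ by the recursion $\hat\Delta'_k\le \q\hat\Delta'_{k-1}+p\,\Delta_t$ with constant forcing term, exactly mirroring the three-term decomposition used for ITD in Theorem~\ref{boundSolver}; this makes the ITD/AID-FP comparison in Section~\ref{sec:discussion} transparent. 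You instead work entirely in the adjoint space: you compare $\vk$ with the truncated Neumann sum $v_k^\star$ at the exact $w(\lambda)$, and $v_k^\star$ with $\Z$, which splits the error into four terms and isolates the two error sources ($k$-truncation versus inexactness of $\wt$) without ever needing Proposition~\ref{thm:20200124a} or the auxiliary matrix iteration. The mechanism producing the improved factor is the same in both arguments --- a geometric recursion with an $O(\rf(t))$ forcing term, unrolled as in Lemma~\ref{lem:sequence}\ref{eq:20200124f} to give $\sum_{i=0}^{k-1}\q^i=(1-\q^k)/(1-\q)$ in place of $1/(1-\q)$ --- which is why the two bounds coincide term by term. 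Two bookkeeping points you handle correctly and that are worth keeping explicit: you need $\rf(t)\le 1$ (Assumption~\ref{assC}\ref{ass:innerconv}) so that $\norm{\wt}\le 2\D$ and Lemma~\ref{lm:boundEPhi} applies at $\wt$; and the choice to apply the crude bound $(1-\q^k)/(1-\q)\le 1/(1-\q)$ only to the $\Low\LPhi$ piece is exactly what is needed to recover $c_1(\lambda)$ and $c_2(\lambda)$ as defined in Theorem~\ref{boundSolver}.
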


We end this section with a discussion about the consequences of the presented results.


\subsection{Discussion}\label{sec:discussion}

Theorem~\ref{th:gradBoundImplicit} shows that Algorithm~\ref{algo2} computes an approximate gradient of $f$ with a linear convergence rate (in $t$ and $k$), provided that the solvers for the lower-level problem and the linear system converge linearly. Furthermore, under Assumption~\ref{ass:contraction}, both AID-FP and \ITD\ converge linearly.
However, if in Algorithm~\ref{algo2}
 we define $\wt$ as at point \ref{ass:innerdynamic} in Algorithm~\ref{algo1} (so that $\rf(t) = \q^t$), 
and take $k=t$, 
then the bound for AID-FP \eqref{eq:boundfixed} is lower than that of \ITD{}  \eqref{eq:boundsolver}, since 
$\q(1 - \q^t)/(1-\q) = \sum_{i=1}^t \q^i < t$ for every $t\geq1$.
This analysis suggests that \IMD{}-FP  converge faster than \ITD{}.

We now discuss the choice of the algorithm to solve the linear system \eqref{eq:linsystem} in Algorithm~\ref{algo2}.
Theorem~\ref{boundFixed} 
provides a bound for AID-FP, which considers procedure \eqref{ass:fixedpointmethod}.
However, we see from~\eqref{eq:boundInvertibleContraction} in  Theorem~\ref{th:gradBoundImplicit}
that a solver for the linear system with rate of convergence $\hrf(k)$ faster than $\q^k$ may give a better bound.
The above discussion, together with \eqref{eq:conjrate} and \eqref{eq:20200106d}, proves that AID-CG
has a better asymptotic rate than AID-FP for instances of problem~\eqref{mainprob1} where the lower-level objective $\ell(\cdot,\lambda)$ is Lipschitz smooth and strongly convex (case~\ref{rmk:20200106b} outlined in Section~\ref{se:dynaDiff}).

\newcommand{\wstart}{w_{\text{start}}}
Finally, we note that both \ITD{} and \IMD{} consider the initialization  $w_0(\lambda) = 0$. However, in a gradient-based bilevel optimization algorithm, it might be more convenient to use a warm start strategy where $w_0(\lambda)$ is set based on previous upper-level iterations. Our analysis can be applied also in this case, but the related upper bounds will depend on the upper-level dynamics. This aspect makes it difficult to theoretically analyse the benefit of a warm start strategy, which remains an open question.

\section{Experiments}\label{sec:experiments}

\newcommand{\figdim}{.255}
\newcommand{\U}{\mathcal{U}}
\begin{figure*}[th]
    \centering
    \hspace{-.2truecm}
    \includegraphics[width=\figdim\textwidth]{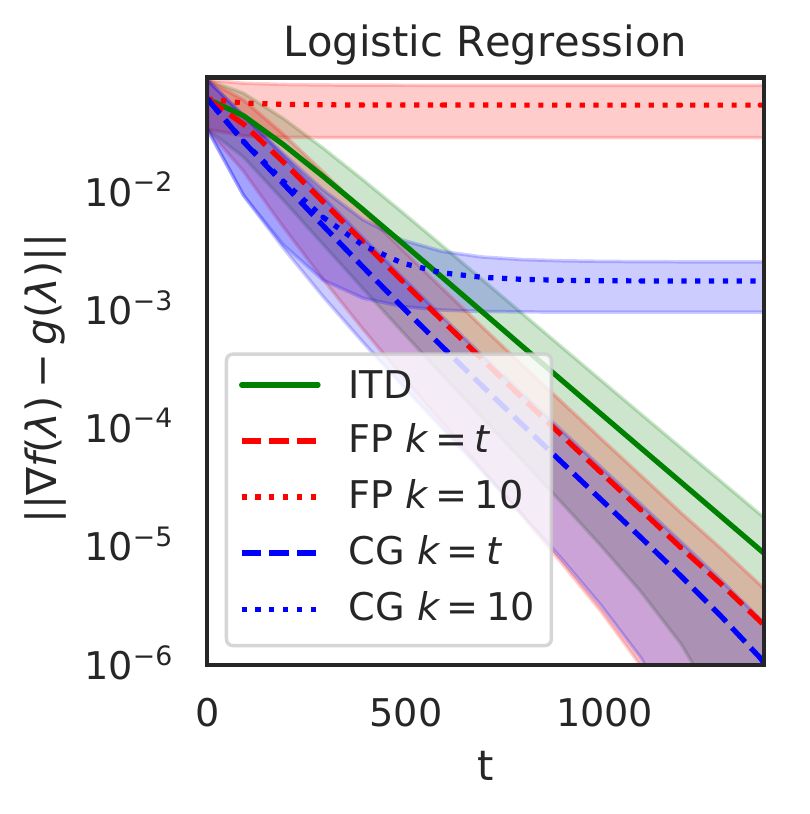}
    \hspace{-.25truecm}
    \includegraphics[width=\figdim\textwidth]{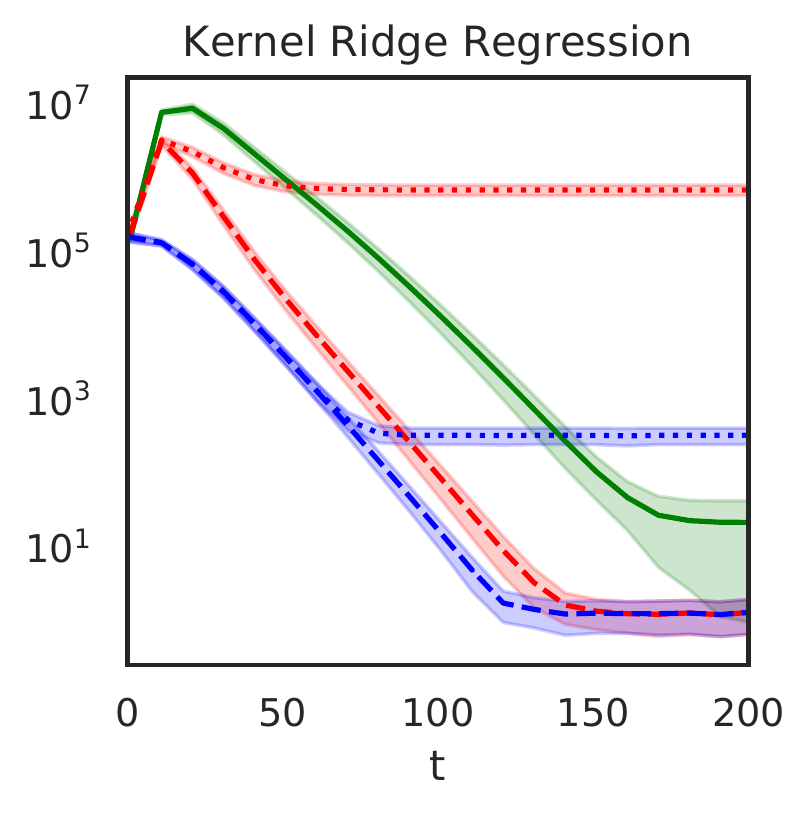}
    \hspace{-.25truecm}
    \includegraphics[width=\figdim\textwidth]{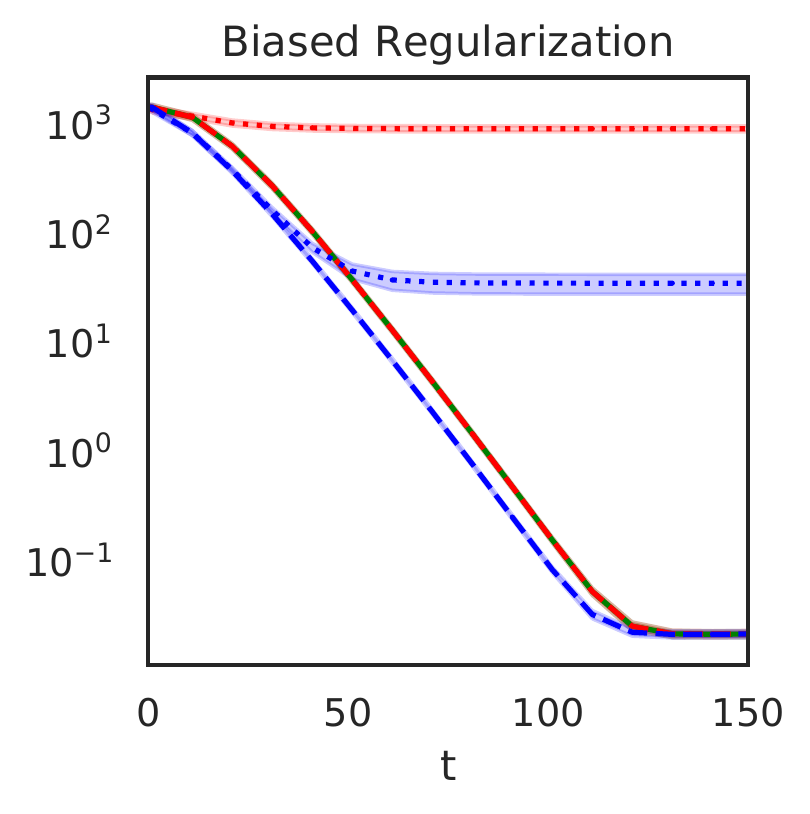}
    \hspace{-.25truecm}
    \includegraphics[width=\figdim\textwidth]{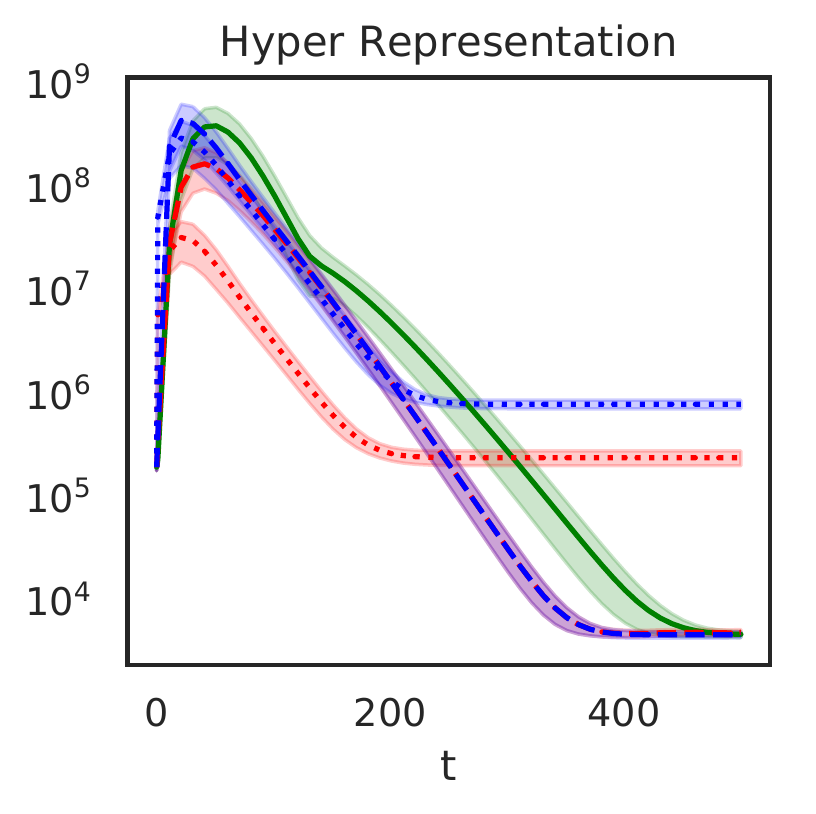}
    \vspace{-6mm}
    \caption{Convergence of different hypergradient approximations, where $g(\lambda)$ is equal to  $\grad f_t(\lambda)$ for \ITD{} and to $\hat\grad f(\lambda)$ for CG and FP.
    Mean and standard deviation (shaded areas) are computed over 20 values of $\lambda$ sampled uniformly from $[\lambda_{\min}, \lambda_{\max}]^n$. }
    \label{fig:synthlambdas}
    \vspace{-1mm}
\end{figure*}

In the first part of this section we
focus on the hypergradient approximation error
and show that the upper bounds presented in the previous section give a good estimate of the actual convergence behaviour of \ITD{} and \IMD{} strategies on a variety of settings. 
In the second part we present a series of experiments pertaining optimization on both the settings of hyperparameter optimization, as in problem~\eqref{mainprob1}, and learning equilibrium models, as in problem~\eqref{mainprob2}.
The algorithms have been implemented\footnote{
The code is freely available at the following link. 

\url{https://github.com/prolearner/hypertorch}} in PyTorch~\cite{NEURIPS2019_9015}.
In the following, we shorthand AID-FP and AID-CG with FP and CG, respectively.

\subsection{Hypergradient Approximation}
\label{sec:4.1}
In this section, we consider several problems of type~\eqref{mainprob1} with synthetic generated data (see \Cref{sec:datagen} for more details) where $\Dt = (\Xt, \yt)$ and $\Dv = (\Xv, \yv)$ are the training and validation sets respectively, with $\Xt \in \R^{n_e \times p}$, $\Xv \in \R^{n'_e \times p}$, being $n_e, n'_e$ the number of examples in each set and $p$ the number of features. 
Specifically we consider the following settings, which are representative instances of 
relevant bilevel problems in machine learning.

{\bfseries Logistic Regression with $\ell_2$ Regularization (LR).} This setting is similar to the one in \citet{pedregosa2016hyperparameter}, but we introduce multiple regularization parameters:
\begin{align*}
    f(\lambda) &= \sum_{(x_e,y_e) \in \Dv} \psi(y_e x_e^\top w(\lambda)), \\
    w(\lambda) &= \argmin_{w \in \R^p} \sum_{(x_e,y_e) \in \Dt} \psi(y_e x_e^\top w) + \frac{1}{2} w^\top {\rm diag}(\lambda) w,
\end{align*}
where $\lambda \in \R^p_{++}$, $\psi(x) = \log (1 + e^{-x})$ and ${\rm diag}(\lambda)$ is the diagonal matrix formed by the elements of $\lambda$.

{\bfseries Kernel Ridge Regression 
(KRR).} We extend the setting presented by \citet{pedregosa2016hyperparameter} considering a $p$-dimensional Gaussian kernel parameter $\gamma$ in place of the usual one: 
\vspace{-1ex}
\begin{equation}
\label{eq:krr}
\begin{aligned}
    f(\beta,\gamma) &= \frac{1}{2} \norm{ \yv - \kerv(\gamma) w(\beta,\gamma) }^2, \\
    w(\beta,\gamma) &= \argmin_{w \in \R^{n_e}} \frac{1}{2} w^\top\left(\kert(\gamma) + \beta I \right)w - w^\top \yt,
\end{aligned}
\vspace{-0.5ex}
\end{equation}
where $\beta \in (0,\infty)$, $\gamma \in \R^p_{++}$ and $\kerv(\gamma)$, $\kert(\gamma)$ 
are respectively the validation and training  kernel matrices (see \Cref{sec:datagen}).

\begin{table*}[th]
\small
\caption{Objective (test accuracy) values after $s$ gradient descent steps where $s$ is $1000$, $500$ and $4000$ for Parkinson, 20 newsgroup and Fashion MNIST respectively. Test accuracy values are in \%. $k_r=10$ for Parkinson and 20 newsgroup while for Fashion MNIST $k_r=5$.}\label{tb:objective}
\vspace{+1mm}
\begin{tabular}[t]{lrr}
\multicolumn{3}{c}{\textbf{Parkinson}} \\
\toprule
&   $t=100$ &   $t=150$ \\
\midrule
 ITD       & 2.39 (75.8) & 2.11 (69.7) \\
 FP $k=t$  & 2.37 (81.8) & 2.20 (77.3) \\
 CG $k=t$  & 2.37 (78.8) & 2.20 (77.3) \\
 FP $k=k_r$ & 2.71 (80.3) & 2.60 (78.8) \\
 CG $k=k_r$ & 2.33 (77.3) & 2.02 (77.3) \\
\bottomrule
\end{tabular}
\hfill
\begin{tabular}[t]{rrr}
\multicolumn{3}{c}{\textbf{20 newsgroup}} \\
\toprule
$t=10$ &   $t=25$ &   $t=50$ \\
\midrule
 1.08 (61.3) & 0.97 (62.8) & 0.89 (64.2) \\
 1.03 (62.1) & 1.02 (62.3) & 0.84 (64.4) \\
 0.93 (63.7) & 0.78 (63.3) & 0.64 (63.1) \\
 \multicolumn{1}{c}{$-$}  & 0.94 (63.6) & 0.97 (63.0) \\
  \multicolumn{1}{c}{$-$} & 0.82 (64.3) & 0.75 (64.2) \\
\bottomrule
\end{tabular}
\hfill
\begin{tabular}[t]{rrr}
\multicolumn{2}{c}{\textbf{Fashion MNIST}} \\
\toprule
$t=5$ &   $t=10$ \\
\midrule
0.41 (84.1) & 0.43 (83.8) \\ 
0.41 (84.1) & 0.43 (83.8) \\ 
0.42 (83.9) & 0.42 (84.0) \\ 
  \multicolumn{1}{c}{$-$} & 0.42 (83.9) \\ 
  \multicolumn{1}{c}{$-$} & 0.42 (84.0) \\ 
\bottomrule
\end{tabular}

\end{table*}


{\bfseries Biased Regularization (BR).}
Inspired by \citet{denevi2019learning,rajeswaran2019meta}, we consider the following.
\begin{equation}\label{eq:biasedreg}
\begin{aligned}
    f(\lambda) &= \frac{1}{2} \norm{\Xv w(\lambda) - \yv}^2, \\
    w(\lambda) &= \argmin_{w \in \R^p} \frac{1}{2} \norm{\Xt w - \yt}^2 + \frac{\beta}{2} \norm{w - \lambda}^2,
\end{aligned}
\nonumber
\end{equation}
where $\beta \in \R_{++}$ and $\lambda \in \R^p$. 

{\bfseries Hyper-representation (HR).}
The last setting, reminiscent of
 \citep{franceschi2018bilevel, bertinetto2019meta}, concerns learning a (common) linear transformation of the data and is formulated as 
\begin{equation}\label{eq:hr}
\begin{aligned}
    f(H) &= \frac{1}{2} \norm{\Xv H w(H) - \yv}^2 \\
    w(H) &= \argmin_{w \in \R^d} \frac{1}{2} \norm{\Xt H w - \yt}^2 + \frac{\beta}{2} \norm{w}^2
\end{aligned} 
\nonumber
\end{equation}
where $H \in \R^{p \times d}$ and  $\beta \in \R_{++}$.

LR and KRR are high dimensional extensions of classical hyperperparameter optimization problems, while BR and HR, 
are typically encountered in multi-task/meta-learning as single task objectives\footnote{In multi-task/meta-learning the upper-level objectives are averaged over multiple tasks and the hypergradient is simply the average of the single task one.}. 
Note that Assumption~\ref{ass:contraction} (i.e. $\Phi(\cdot,\lambda)$ is a contraction) can be satisfied for each of the aforesaid scenarios, since they all belong to case~\ref{rmk:20200106b} of Section~\ref{se:dynaDiff} (KRR, BR and HR also to case~\ref{caseb}).

We solve the lower-level problem in the same way for both \ITD{} and \IMD{} methods.
In particular, in LR we use the gradient descent method with optimal step size as in case~\ref{rmk:20200106b} of Section~\ref{se:dynaDiff}, while for the other cases we use the heavy-ball method with optimal step size and momentum constants. Note that this last method is not a contraction in the original norm, but only in a suitable norm depending on the lower-level problem itself.
To compute the exact hypergradient, we differentiate $f(\lambda)$ directly using RMAD  for KRR, BR and HR, where the closed form expression for $w(\lambda)$ is available, while for LR we use CG with $t=k = 2000$ in place of the (unavailable) analytic gradient. 

Figure~\ref{fig:synthlambdas} shows how the approximation error is affected by the number of lower-level iterations $t$. 
As suggested by the iteration complexity bounds in \Cref{sec:theory}, all the approximations, 
after a certain number of iterations, 
converge linearly to the true hypergradient\footnote{The asymptotic error can be quite large probably due to  numerical errors (more details in \Cref{app:exp}).}. Furthermore, in line with our analysis (see \Cref{sec:discussion}), CG gives the best gradient estimate (on average), followed by FP, while \ITD\ performs the worst. For HR, the error of all the methods increases significantly at the beginning, which can be explained by the fact that the heavy ball method
is not a contraction in the original norm and may diverge at first.
CG $k=10$ outperforms FP $k=10$ on 3 out of 4 settings but both remain far from convergence.

\subsection{Bilevel Optimization}
\label{sec:4.2}
In this section, we aim to solve instances of the bilevel problem~\eqref{mainprob} in which 
$\lambda$ has a high dimensionality.  

{\bfseries Kernel Ridge Regression on Parkinson.}
We take $f(\beta,\gamma)$ as defined in problem~\eqref{eq:krr} where the data is taken from the UCI Parkinson dataset~\cite{little2008suitability}, containing 195 biomedical voice measurements (22 features) from people with Parkinson's disease. To avoid projections, we replace $\beta$ and $\gamma$ respectively with $\exp(\beta)$ and $\exp(\gamma)$ in the RHS of the two equations in~\eqref{eq:krr}.  We split the data randomly into three equal parts to make the train, validation and test sets. 

{\bfseries Logistic Regression on 20 Newsgroup\footnote{http://qwone.com/~jason/20Newsgroups/}.}  
This dataset contains 18000 news divided in 20 topics and the features consist in 101631 tf-idf sparse vectors. We split the data randomly into three equal parts for training, validation and testing. We aim to solve the bilevel problem
\begin{equation}
\begin{gathered}
\min_{\lambda \in \R^{p}} \mathrm{CE}(\Xv w(\lambda), \yv) \\
w(\lambda) = \argmin_{w \in \R^{p \times c}} \mathrm{CE}(\Xt w, \yt) + \frac{1}{2cp}\sum_{i=1}^c \sum_{j=1}^p \exp(\lambda_j) w_{ij}^2 
\end{gathered}
\nonumber 
\end{equation}
where $\mathrm{CE}$ is the average cross-entropy loss,  
$c=20$ and $p=101631$. To improve the performance, we use warm-starts on the lower-level problem, i.e. we take $w_0(\lambda_i) = w_t(\lambda_{i-1})$ for all methods, 
where $(\lambda_i)_{i=1}^s$ are the upper-level iterates. 

{\bfseries Training Data Optimization on Fashion MNIST.}
Similarly to \cite{maclaurin2015gradient}, 
we optimize the features of a set of 10 training points,
each with a different class label on the Fashion MNIST dataset~\cite{xiao2017/online}.
More specifically we define the bilevel problem as
\begin{equation}
\begin{gathered}
\min_{\Xt \in \R^{c \times p}} \mathrm{CE}(\Xv w(\Xt), \yv) \\[-0.8ex]
w(\Xt) = \arg\min_{w \in \R^{p \times c}} \mathrm{CE}(\Xt w, \yt) + \frac{\beta}{2cp} \norm{ w }^2
\end{gathered}
\nonumber 
\end{equation}
where $\beta = 1$, $c=10$, $p=784$, $y = (0, \dots, c)^\top$ and $(\Xv, \yv)$ contains the usual training set.

We solve each problem using  (hyper)gradient descent with fixed step size selected via grid search (additional details are provided in \Cref{app:bopt}). The results in~\Cref{tb:objective} show
the upper-level objective and test accuracy both computed on the approximate lower-level  solution $w_t(\lambda)$ after bilevel optimization\footnote{For completeness, we also report in the Appendix (\Cref{tb:trueobj})  the upper-level objective and test accuracy both computed on the exact lower-level  solution $w(\lambda)$.}.
For Parkinson and Fashion MNIST, there is little difference among the methods for a fixed $t$. For 20 newsgroup, CG $k=t$ reaches the lowest objective value, followed by CG $k=10$. We recall that for \ITD\ we have cost in memory which is linear in $t$ and that, in the case of 20 newsgroup for some $t$ between $50$ and $100$, this cost exceeded the 11GB on the GPU. AID methods instead, require little memory and, by setting $k < t$, yield  similar or even better performance at a lower computation time.
Finally, we stress that since the upper-level objective is nonconvex, 
possibly with several minima,
gradient descent with a more precise estimate of the hypergradient may get more easily trapped in a bad local minima.

\begin{figure*}[t]
    \centering
    \includegraphics[width=\textwidth]{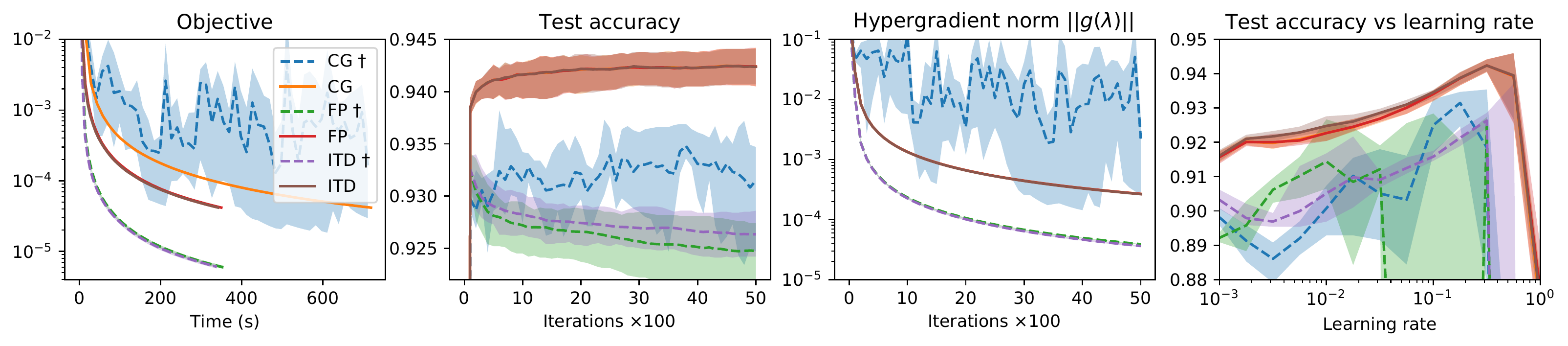}
    \vspace{-5mm}
    \caption{Experiments on EQM problems.  Mean (solid or dashed lines) and point-wise minimum-maximum range (shaded regions) across 5 random seeds that only control the initialization of $\lambda$.
    The estimated hypergradient $g(\lambda)$ is 
    equal to $\nabla f_t(\lambda)$ for \ITD{} and $\hat{\nabla} f(\lambda)$ for \IMD.
    We used $t=k=20$ for all methods and Nesterov momentum for optimizing $\lambda$, applying a projection operator at each iteration except for the methods marked with $\dag$. When performing projection, the curves produced by the three approximation schemes mostly overlap, indicating essentially the same performance (although at a different computational cost).
    }
    \label{fig:eqm}
\end{figure*}

{\bfseries Equilibrium Models.} 
Our last set of experiments investigates the behaviour of the hypergradient approximation methods on a simple instance of EQM (see problem~\eqref{mainprob2}) on non-structured data. EQM are an attractive class of models due to their mathematical simplicity, enhanced interpretability 
and memory efficiency. 
A number of works \citep{miller2019stable, bai2019deep} have recently shown that EQMs can perform on par with standard deep nets on a variety of complex tasks, renewing the interest in these kind of models. 

We use a subset of $n_e=5000$ instances randomly sampled from the MNIST dataset as training data and employ a multiclass logistic classifier paired with a cross-entropy loss.
We picked a small training set and purposefully avoided stochastic optimization methods to better focus on issues related to the computation of the hypergradients itself, avoiding the introduction of other sources of noise. 
We parametrize $\phi_i$ as 
\begin{equation}
\label{eq:qlin_eqm}
\phi_i(w_i, \gamma) = \tanh(A w_i + B x_i + c) \quad \text{for} \; 1\leq i \leq n_e
\end{equation}
where $x_i\in\R^{p}$ is the $i$-th example, $w_i\in\mathbb{R}^h$ and $\gamma=(A, B, C)\in
\mathbb{R}^{h\times h}\times \mathbb{R}^{h\times p} \times \mathbb{R}^h$. 
Such a model may be viewed as a (infinite layers) feed-forward neural network with tied weights or as a recurrent neural network with static inputs. 
Additional experiments with convolutional equilibrium models may be found in Appendix \ref{sec:apx:eqmconv}.
Imposing $\norm{A}<1$ ensures that the transition functions \eqref{eq:qlin_eqm}, and hence $\Phi$, are contractions. This can be  achieved during
optimization by projecting the singular values of $A$ onto the interval $[0,1-\varepsilon]$ for $\varepsilon>0$. 
We note that regularizing the norm of $\partial_1\phi_i$ or adding $L_1$ or $L_{\infty}$ penalty terms on $A$ may encourage, but does not strictly enforce, $\norm{A}<1$.

We conducted a series of experiments to ascertain the importance of the contractiveness of the map $\Phi$, as well as to understand which of the analysed methods is to be preferred in this setting. Since here $\partial_1 \Phi$ is not symmetric, the conjugate gradient method must be applied on the normal equations of problem~\eqref{eq:minquadratic}. We set $h=200$ and use $t=20$ fixed-point iterations to solve the lower-level problem in all the experiments. The first three plots of Figure \ref{fig:eqm} report training objectives, test accuracies and 
 norms of the estimated hypergradient for 
 each of the
 three methods, either applying or not the constraint on $A$, while the last explores the sensitivity of the methods to the choice of the learning rate. Unconstrained runs are marked with $\dag$. 
 Referring to the rightmost plot, it is clear (large shaded regions) that not constraining the spectral norm results in unstable behaviour of the ``memory-less'' \IMD{} methods (green and blue lines) for all but a few learning rates, 
while \ITD{} (violet), as expected, suffers comparatively less.
On the contrary, when $\norm{A}<1$ is enforced, all the approximation methods are successful and stable, with FP to be preferred being faster then CG on the normal equations and requiring substantially less memory than \ITD. 
As a side note, referring to Figure \ref{fig:eqm} left and center-left, we observe that projecting onto the spectral ball acts as powerful regularizer, in line with the findings of \citet{sedghi2019singular}. 



\section{Conclusions}
We studied a general class of bilevel problems where at the lower-level we seek for a solution to a parametric fixed point equation. This formulation encompasses several learning algorithms recently considered in the literature. We established results on the iteration complexity of two strategies  to compute the hypergradient (ITD and AID) under the assumption that the fixed point equation is defined by a contraction mapping.
Our practical experience with these methods on a number of bilevel problems indicates that there is a trade-off between the methods, with AID based on the conjugate gradient method being preferable due to a potentiality better approximation of the hypergradient and lower space complexity.
When the contraction assumption is not satisfied, however, our experiments on equilibrium models suggest that ITD is more reliable than AID methods.  
In the future, it would be valuable to extend the ideas presented here to other challenging machine learning scenarios not covered by our theoretical analysis. These include bilevel problems in which the lower-level is only locally contactive, nonsmooth, possibly nonexpansive or can only be solved via a stochastic procedure.
At the same time, there is a need to clarify the tightness of the iteration complexity bounds presented here.

\paragraph{Acknowledgments.} This work was supported in part by a grant from SAP SE.
\bibliography{convhg}
\bibliographystyle{icml2020}



\newpage
\appendix

\newpage
\clearpage



{\huge\textbf{Appendix}}\\

The Appendix is organized as follows.
\begin{itemize}
    \item Appendix~\ref{sec:proofs} presents the proofs of the results stated in \Cref{sec:theory}.   
    \item Appendix~\ref{app:gd} specializes the bounds in \Cref{sec:theory} in the case where the lower-level solution can be written as the fixed point of a one step gradient descent map.
    \item Appendix~\ref{app:exp} presents the details of the experiments in \Cref{sec:experiments} and additional results.
\end{itemize}

\section{Proofs of the Results in Section~\ref{sec:theory}}
\label{sec:proofs}
In this section we provide complete proofs of the results presented in the main body, which are restated here for the convenience of the reader. We also report few necessary additional results. 
\begin{theorem}
\label{th:gradexist} (Differentiability of $f$). 
Consider problem \eqref{mainprob} and suppose that Assumption~\ref{assA}\ref{ass:fixedpoint}-\ref{ass:invert} holds.
Then $ w(\cdot)$ and $f(\cdot)$ are differentiable 
on $\Lambda$ and, for every $\lambda \in \Lambda$
\begin{align}
\label{eq:Agradexist_1}
w'(\lambda) &= (I - \jac_1\Phi(w(\lambda), \lambda))^{-1}\jac_2 \Phi(w(\lambda), \lambda) \\
\label{eq:Agradexist_2}\grad{f}(\lambda) &= \grad_2 \fo (w(\lambda), \lambda)  + w'(\lambda)^\top \grad_1 \fo (w(\lambda), \lambda).
\end{align}
\end{theorem}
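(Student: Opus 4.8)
The plan is to read \eqref{eq:Agradexist_1}--\eqref{eq:Agradexist_2} as a textbook application of the implicit function theorem to the residual map $G(w,\lambda):=w-\Phi(w,\lambda)$, followed by two uses of the chain rule. First I would fix an arbitrary $\lambda_0\in\Lambda$ and set $w_0:=w(\lambda_0)$. Since $\Phi$ is continuously differentiable, so is $G$; by Assumption~\ref{assA}\ref{ass:fixedpoint} we have $G(w_0,\lambda_0)=0$, and $\jac_1 G(w_0,\lambda_0)=I-\jac_1\Phi(w_0,\lambda_0)$ is invertible by Assumption~\ref{assA}\ref{ass:invert}. The implicit function theorem then provides open neighborhoods $U$ of $w_0$ and $V$ of $\lambda_0$ and a continuously differentiable map $\tilde w\colon V\to U$ with $G(w,\lambda)=0\iff w=\tilde w(\lambda)$ on $U\times V$. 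Because for each $\lambda$ the fixed point of $\Phi(\cdot,\lambda)$ is unique (Assumption~\ref{assA}\ref{ass:fixedpoint}) and $\tilde w(\lambda)$ is by construction such a fixed point, we get $w(\lambda)=\tilde w(\lambda)$ for all $\lambda\in V$; hence $w(\cdot)$ is differentiable at $\lambda_0$, and since $\lambda_0$ was arbitrary, on all of $\Lambda$.

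Next I would derive the closed form for $w'(\lambda)$ by differentiating the identity $w(\lambda)=\Phi(w(\lambda),\lambda)$. The chain rule gives $w'(\lambda)=\jac_1\Phi(w(\lambda),\lambda)\,w'(\lambda)+\jac_2\Phi(w(\lambda),\lambda)$, i.e.\ $(I-\jac_1\Phi(w(\lambda),\lambda))\,w'(\lambda)=\jac_2\Phi(w(\lambda),\lambda)$; left-multiplying by the inverse, which exists by Assumption~\ref{assA}\ref{ass:invert}, yields \eqref{eq:Agradexist_1}. Then $f=\fo(w(\cdot),\cdot)$ is differentiable as a composition of differentiable maps, and applying the chain rule once more gives $\grad f(\lambda)=\grad_2\fo(w(\lambda),\lambda)+w'(\lambda)^\top\grad_1\fo(w(\lambda),\lambda)$, which is \eqref{eq:Agradexist_2}.

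The only delicate point is that $\Lambda$ is merely a closed convex subset of $\R^n$, so ``differentiable on $\Lambda$'' and the application of the implicit function theorem at a boundary point $\lambda_0$ must be understood via a $C^1$ extension of $\Phi$ (and $\fo$) to an open neighborhood of $\lambda_0$ in $\R^n$, or equivalently via relative derivatives along $\Lambda$; under this convention the argument goes through verbatim. This is the main obstacle to state cleanly, but it is purely a matter of bookkeeping—everything substantive is the implicit function theorem together with the invertibility hypothesis, which is exactly what Assumption~\ref{assA}\ref{ass:fixedpoint}--\ref{ass:invert} supplies.
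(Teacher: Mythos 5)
Your proposal is correct and follows essentially the same route as the paper: apply the implicit function theorem to the residual $G(w,\lambda)=w-\Phi(w,\lambda)$, identify the local implicit function with $w(\cdot)$ via uniqueness of the fixed point, and obtain \eqref{eq:Agradexist_1}--\eqref{eq:Agradexist_2} by differentiating the fixed-point identity and applying the chain rule. Your explicit remarks on the role of uniqueness and on interpreting differentiability on the closed set $\Lambda$ are finer-grained than the paper's proof but do not change the argument.
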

\begin{proof}
The function $G(w, \lambda) := w - \Phi(w, \lambda)$ is continuously differentiable on $\R^d \times \Lambda$. Then, we have 
\begin{equation*}
    \jac_1 G(w(\lambda), \lambda) = I - \jac_1\Phi(w(\lambda), \lambda),
\end{equation*}
which is  invertible due to Assumption~\ref{assA}\ref{ass:invert}.
Thus, since $G(w(\lambda), \lambda) = 0$, the implicit function theorem yields that $w(\lambda)$ is continuously differentiable with derivative
\begin{align*}
    w^\prime(\lambda) &=\jac_1 G(w(\lambda), \lambda)^{-1} \jac_2 G(w(\lambda), \lambda)\\
    &= (I - \jac_1\Phi(w(\lambda), \lambda))^{-1} \jac_2\Phi(w(\lambda), \lambda).
\end{align*}
Finally, 
\eqref{eq:Agradexist_2} follows from the chain rule for differentiation.
\end{proof}

\begin{corollary}
\label{cor:gradexist}
Suppose that in problem \eqref{mainprob1}, the
function $\fin\colon\R^d \times \Lambda \to \R$ is twice continuously differentiable 
and strongly convex w.r.t.~the first variable.
Let $\alpha\colon \Lambda \to \R_{++}$
be a differentiable function. Then the conclusions of Theorem~\ref{th:gradexist} hold and
\begin{align*}
(\forall\, \lambda \in \Lambda)\quad w^\prime(\lambda) = - \grad^2_1\fin(w, \lambda)^{-1} \grad_{21}^2\fin(w(\lambda), \lambda). 
\end{align*}
\end{corollary}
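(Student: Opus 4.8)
The plan is to derive Corollary~\ref{cor:gradexist} as a direct specialization of Theorem~\ref{th:gradexist} applied to the particular fixed-point map $\Phi(w,\lambda) = w - \alpha(\lambda)\grad_1\fin(w,\lambda)$, which is precisely the map that makes problems~\eqref{mainprob} and \eqref{mainprob1} equivalent (as recalled right after the statement of problem~\eqref{mainprob1}). So the first step is to verify that this $\Phi$ satisfies the hypotheses of Theorem~\ref{th:gradexist}, namely Assumption~\ref{assA}\ref{ass:fixedpoint}-\ref{ass:invert}. For \ref{ass:fixedpoint}: since $\fin(\cdot,\lambda)$ is strongly convex, $\grad_1\fin(\cdot,\lambda)$ has a unique zero, which coincides with the minimizer $w(\lambda)$ in \eqref{mainprob1}; because $\alpha(\lambda)>0$, the fixed points of $\Phi(\cdot,\lambda)$ are exactly the zeros of $\grad_1\fin(\cdot,\lambda)$, hence $w(\lambda)$ is the unique fixed point. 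For \ref{ass:invert}: compute $\jac_1\Phi(w,\lambda) = I - \alpha(\lambda)\hess_1\fin(w,\lambda)$, so $I - \jac_1\Phi(w(\lambda),\lambda) = \alpha(\lambda)\hess_1\fin(w(\lambda),\lambda)$; since $\fin(\cdot,\lambda)$ is twice continuously differentiable and strongly convex, $\hess_1\fin(w(\lambda),\lambda)$ is symmetric positive definite and hence invertible, and multiplying by the positive scalar $\alpha(\lambda)$ preserves invertibility.

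Once the hypotheses are checked, Theorem~\ref{th:gradexist} immediately gives that $w(\cdot)$ and $f(\cdot)$ are differentiable on $\Lambda$ (this is the first conclusion we want) and supplies formula~\eqref{eq:Agradexist_1} for $w'(\lambda)$. The remaining step is a substitution: plug $\jac_1\Phi(w(\lambda),\lambda) = I - \alpha(\lambda)\hess_1\fin(w(\lambda),\lambda)$ and $\jac_2\Phi(w(\lambda),\lambda) = -\dsz\,\grad_1\fin(w(\lambda),\lambda)^\top - \alpha(\lambda)\grad_{21}^2\fin(w(\lambda),\lambda)$ into \eqref{eq:Agradexist_1}. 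Here one uses that at the fixed point $\grad_1\fin(w(\lambda),\lambda)=0$, so the term involving $\dsz$ vanishes. Then
\[
w'(\lambda) = \big(\alpha(\lambda)\hess_1\fin(w(\lambda),\lambda)\big)^{-1}\big(-\alpha(\lambda)\grad_{21}^2\fin(w(\lambda),\lambda)\big) = -\hess_1\fin(w(\lambda),\lambda)^{-1}\grad_{21}^2\fin(w(\lambda),\lambda),
\]
the scalars $\alpha(\lambda)$ cancelling. This is exactly the claimed expression (modulo the harmless typo in the statement where the first occurrence of $\hess_1\fin$ has argument $w$ rather than $w(\lambda)$).

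There is no real obstacle here; the only point requiring a moment's care is the computation of $\jac_2\Phi$ and the observation that the awkward rank-one term $\dsz\,\grad_1\fin(w(\lambda),\lambda)^\top$ disappears because $\grad_1\fin$ vanishes at the fixed point — without this cancellation the dependence of $\alpha$ on $\lambda$ would contribute spurious terms. One should also note that strong convexity is slightly stronger than what is strictly needed (mere invertibility of $\hess_1\fin(w(\lambda),\lambda)$ plus existence and uniqueness of the minimizer would suffice), but it is the hypothesis stated and it makes both \ref{ass:fixedpoint} and \ref{ass:invert} transparent, so I would use it directly. The proof is therefore short: check the two parts of Assumption~\ref{assA}, invoke Theorem~\ref{th:gradexist}, and simplify.
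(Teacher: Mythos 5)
Your proposal is correct and follows essentially the same route as the paper's own proof: verify Assumption~\ref{assA}\ref{ass:fixedpoint}--\ref{ass:invert} for $\Phi(w,\lambda)=w-\alpha(\lambda)\grad_1\fin(w,\lambda)$ via Fermat's rule and $I-\jac_1\Phi(w(\lambda),\lambda)=\alpha(\lambda)\grad_1^2\fin(w(\lambda),\lambda)$, then substitute into \eqref{eq:Agradexist_1} and cancel the $\alpha(\lambda)$ factors. Your explicit observation that the rank-one term $\dsz\,\grad_1\fin(w(\lambda),\lambda)^\top$ in $\jac_2\Phi$ vanishes at the fixed point is the same point the paper records in Remark~\ref{re:avoidcrazystuff} and uses implicitly in its proof of the corollary.
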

\begin{proof}
Define 
$\Phi (w,\lambda) =  w - \alpha(\lambda)\grad_1\fin(w, \lambda)$. Then,
Fermat's rule for the lower-problem in \eqref{mainprob1} yields that
$w(\lambda)$ is a fixed point for $\Phi(\cdot,\lambda)$, while $I - \jac_1\Phi(w(\lambda), \lambda)$ is invertible since
\begin{equation*}
I - \jac_1\Phi(w(\lambda), \lambda) 
= \alpha(\lambda)\grad^2_1\fin(w, \lambda)
\end{equation*}
and $\alpha(\lambda) \neq 0$. Therefore, 
Theorem~\ref{th:gradexist} applies and, since $\jac_2 \Phi (w(\lambda), \lambda) = - \alpha(\lambda) \grad^2_{21} \fin(w(\lambda), \lambda)$,
\eqref{eq:Agradexist_1} yields $w^\prime(\lambda) = - \frac{\alpha(\lambda)}{\alpha(\lambda)}\grad^2_1\fin(w, \lambda)^{-1} \grad_{21}^2\fin(w(\lambda), \lambda)$.
\end{proof}

\optnormcontractive*
\begin{proof}
Let $\lambda \in \Lambda$ and $w \in \R^d$.
Since $\Phi(\cdot, \lambda)$ is Lipschitz continuous with constant $q_\lambda$, it follows that 
\begin{equation}
\label{eq:20191223a}
\norm{\partial_1 \Phi(w,\lambda)} \leq q_\lambda<1.
\end{equation}
Therefore,
\begin{equation*}
\sum_{k=0}^\infty \norm{\jac_1\Phi(w, \lambda)}^k \leq \sum_{k=0}^\infty \q^k 
= \frac{1}{1-\q}.
\end{equation*}
Thus, $I - \jac_1\Phi(w, \lambda)$ is invertible and $\sum_{k=0}^\infty \jac_1\Phi(w, \lambda)^k = (I - \jac_1\Phi(w, \lambda))^{-1}$
and the bound follows.
\end{proof}

In the following technical lemma  we give two results which are fundamental for the proofs of the ITD bound (\Cref{boundSolver}) and the AID-FP bound (\Cref{boundFixed}).
The first result is standard
(see \cite{Polyak1987}, Lemma~1, Section~2.2).

\begin{lemma}
\label{lem:sequence}
Let $(u_k)_{k \in \N}$ and $(\tau_k)_{k \in \N}$ 
be two sequences of real non-negative numbers and let $q \in [0,\infty)$. Suppose that, for every $k \in \N$, with $k\geq 1$,
\begin{equation}
    u_{k} \leq q u_{k-1} + \tau_{k-1}.
\end{equation}
Then, the following hold.
\begin{enumerate}[(i)]
    \item\label{eq:20200124f} If $(\tau_k)_{k \in \N} \equiv \tau$, then $u_k \leq q^k u_0 + \tau(1-q^k)/(1-q)$.
    \item\label{eq:20200124g} If, for every integer $k\geq 1$, $\tau_k \leq q \tau_{k-1}$, then
    $u_k \leq q^k u_0 + k q^{k-1}\tau_0$.
\end{enumerate}
\end{lemma}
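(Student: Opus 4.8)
\textbf{Proof plan for Lemma~\ref{lem:sequence}.}
The plan is to prove both statements by unrolling the recursion $u_k \leq q u_{k-1} + \tau_{k-1}$. First I would establish, by an easy induction on $k$, the closed form
\begin{equation}
\label{eq:unroll}
    u_k \leq q^k u_0 + \sum_{j=0}^{k-1} q^{k-1-j}\tau_j,
\end{equation}
valid for every $k \in \N$ with $k \geq 1$ (and trivially for $k=0$ with the empty sum). The base case $k=1$ is exactly the hypothesis, and the inductive step substitutes the bound for $u_{k-1}$ into $u_k \leq q u_{k-1} + \tau_{k-1}$ and collects terms. This single identity is the only real ``work''; the two cases then follow by bounding the sum $\sum_{j=0}^{k-1} q^{k-1-j}\tau_j$ appropriately.

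For part~\eqref{eq:20200124f}, with $\tau_j \equiv \tau$ the sum becomes $\tau \sum_{j=0}^{k-1} q^{k-1-j} = \tau \sum_{i=0}^{k-1} q^i = \tau(1-q^k)/(1-q)$ when $q \neq 1$ (and the degenerate case $q=1$, if one wishes to keep it, gives $k\tau$, consistent with taking the limit; but the statement as used has $q\in(0,1)$, so I would just record the geometric-sum formula). Plugging this into \eqref{eq:unroll} gives $u_k \leq q^k u_0 + \tau(1-q^k)/(1-q)$, as claimed.

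For part~\eqref{eq:20200124g}, the hypothesis $\tau_k \leq q\tau_{k-1}$ for $k \geq 1$ yields by induction $\tau_j \leq q^j \tau_0$ for all $j \geq 0$. Substituting into the sum in \eqref{eq:unroll},
\begin{equation*}
    \sum_{j=0}^{k-1} q^{k-1-j}\tau_j \leq \sum_{j=0}^{k-1} q^{k-1-j} q^j \tau_0 = \sum_{j=0}^{k-1} q^{k-1}\tau_0 = k q^{k-1}\tau_0,
\end{equation*}
and combining with \eqref{eq:unroll} gives $u_k \leq q^k u_0 + k q^{k-1}\tau_0$, completing the proof. There is no genuine obstacle here: the whole lemma is a routine unrolling, and the only point requiring minor care is keeping the index bookkeeping in \eqref{eq:unroll} straight (and, if one insists on full generality in $q$, treating $q=1$ separately, though this is not needed for the applications in the paper).
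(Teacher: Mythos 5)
Your proposal is correct and follows essentially the same route as the paper: unroll the recursion to get $u_k \leq q^k u_0 + \sum_{i=0}^{k-1} q^i \tau_{k-1-i}$ (your sum is the same after reindexing), then specialize the sum in each of the two cases exactly as the paper does. The remark about treating $q=1$ separately in part (i) is a reasonable extra precaution but not needed for how the lemma is used.
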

\begin{proof}
Let $k \in \N$, with $k \geq 1$.
Then, we have
\begin{align}
\nonumber u_{k} &\leq q u_{k-1} + \tau_{k-1}\\
\nonumber& \leq q (q u_{k-2} + \tau_{k-2}) + \tau_{k-1}\\
\nonumber& = q^2 u_{k-2} + (\tau_{k-1} + q \tau_{k-2})\\
&\;\;\vdots \notag \\
\label{eq:20200127a}&\leq q^k u_0 
+ \sum_{i=0}^{k-1} q^i \tau_{k-1-i}.
\end{align}

\ref{eq:20200124f}: Suppose that $(\tau_k)_{k \in \N} \equiv \tau$. Then it follows from \eqref{eq:20200127a} that $u_k \leq q^k u_0 + \tau \sum_{i=0}^{k-1} q^i =  q^k u_0 + \tau (1-q^k)/(1-q)$.

\ref{eq:20200124g}:
Suppose that, for every integer $k \geq 1$,
$\tau_{k} \leq q \tau_{k-1}$. Then, for every 
integers $k,i$ with $i \leq k-1$, we have
$\tau_{k-1- i} \leq q^{k-1-i} \tau_0$, which substituted into \eqref{eq:20200127a} yields
\begin{equation*}
    u_k \leq q^k u_0 + \sum_{i=0}^{k-1} q^i q^{k-1-i} \tau_0
\end{equation*}
and \ref{eq:20200124g} follows.
\end{proof}

\boundDiff*
\begin{proof}
We assume that $(\wt)_{t \in \N}$ is defined through the iteration
\begin{equation}
\label{ass:innerdynamic2}
\wt = \Phi(w_{t-1}(\lambda),\lambda)
\end{equation}
starting from $w_0(\lambda)=0 \in \R^d$.
Let $t \in \N$ with $t\geq 1$. Then,
the mapping
$\lambda \mapsto w_t(\lambda)$ is differentiable since, in view of \eqref{ass:innerdynamic}, it is a composition of differentiable functions, whereas $w^\prime(\lambda)$ exists due to Theorem~\ref{th:gradexist}. 
Differentiating the lower-level equation in \eqref{mainprob} and the recursive equation in \eqref{ass:innerdynamic2},
we get 
\begin{align}
\nonumber w^\prime_{t}(\lambda) &=   \jac_1\Phi(w_{t-1}(\lambda), \lambda) w'_{t-1}(\lambda)  + \jac_2\Phi(w_{t-1}(\lambda), \lambda)\\
\label{eq:20200224b}
w^\prime(\lambda) &=   \jac_1\Phi(w(\lambda), \lambda) w^\prime(\lambda)  + \jac_2\Phi(w(\lambda), \lambda).
\end{align}
Therefore, we get
\begin{align*}
    \lVert w^\prime_{t}(\lambda) &-
    w^\prime(\lambda)\rVert  \\ 
    &\leq 
     \norm{\jac_1\Phi ( w_{t-1}(\lambda), \lambda)- \jac_1\Phi ( w(\lambda), \lambda)} \norm{w^\prime(\lambda)} \\
    &\qquad + \norm{\jac_1\Phi ( w_{t-1}(\lambda), \lambda)}
    \norm{w^\prime_{t-1}(\lambda)-w^\prime(\lambda) } \\
    & \qquad +\norm{\jac_2\Phi ( w_{t-1}(\lambda), \lambda)- \jac_2\Phi ( w(\lambda), \lambda)}
\end{align*}
and hence, we derive from Assumption \ref{assA}\ref{ass:philip}, Assumption~\ref{ass:contraction}, equation \eqref{eq:gradexist_1} and Lemmas \ref{lm:boundEPhi} and \ref{lm:optnormcontractive}, that
\begin{align*}
    \lVert w^\prime_{t}(\lambda) &- w^\prime(\lambda) \rVert \\
    &\leq  (\rhol + \rhow\LPhi/(1-\q)) \norm{w_{t-1}(\lambda) - w(\lambda) }\\
    &\qquad + \q \norm{  w^\prime_{t-1}(\lambda) - w^\prime(\lambda)}.
\end{align*}
 Then, setting $p :=  \rhol + \rhow\LPhi/(1-\q)$, $\Delta_{t} : = \norm{ w_{t}(\lambda) - w(\lambda)}$ 
 and $\Delta^\prime_t := \norm{  w^\prime_{t}(\lambda) - w^\prime(\lambda)}$,
 we get
 \begin{equation*}
  \Delta_{t} \leq \q \Delta_{t-1}  
  \quad\text{and}\quad
  \Delta^\prime_t \leq \q \Delta^\prime_{t-1} + p \Delta_{t-1}.
 \end{equation*}
 Therefore, it follows from 
 Lemma~\ref{lem:sequence}\ref{eq:20200124g}
 (with $u_t = \Delta^\prime_t$ and $\tau_t = p \Delta_{t}$) that
\begin{equation*}
\Delta^\prime_t 
\leq \q^t\Delta^\prime_0 +  t \q^{t-1} p \Delta_0
\leq\frac{\LPhi}{1 - \q} \q^{t} + p \D t \q^{t-1},
\end{equation*}
where in the last inequality we used the bounds (see
\eqref{eq:20200224b} and Lemmas~\ref{lm:boundEPhi} and \ref{lm:optnormcontractive})
\begin{align}
\nonumber \Delta_0 &= \norm{w(\lambda) - w_0(\lambda)} = \norm{w(\lambda)} \leq \D \\
\label{eq:20200124c}\Delta^\prime_0 &= \norm{w^\prime(\lambda) - w^\prime_0(\lambda)} = \norm{w^\prime(\lambda)}  
    \leq \frac{\LPhi}{1 - \q}. 
\end{align}
Recalling the definitions of $p$ and $\Delta^\prime_t$, \eqref{eq:20200124e} follows.
\end{proof}

\boundSolver*
\begin{proof}
It follows from the definitions of $f_t$
and $f$ in Algorithm~\ref{algo1} and \eqref{mainprob} respectively and the chain rule for differentiation that
\begin{align*}
 \grad f_t(\lambda) &= \grad_2 \fo (w_t(\lambda), \lambda)  + w'_t(\lambda)  ^\top\grad_1 \fo (w_t(\lambda), \lambda)\\
 \grad f(\lambda) &= \grad_2 \fo (w(\lambda), \lambda) +  w^\prime(\lambda)^\top \grad_1\fo (w(\lambda), \lambda).
\end{align*}
Therefore, 
\begin{align*}
    \lVert \grad f_t&(\lambda) - \grad f(\lambda)\rVert \\
    &\leq  \norm{\grad_2 \fo(w_t(\lambda), \lambda) - \grad_2 \fo(w(\lambda), \lambda)} \\
    &\qquad +\norm{w^\prime(\lambda)} \norm{\grad_1 \fo(w_t(\lambda), \lambda) - \grad_1 \fo(w(\lambda), \lambda)} \\
    &\qquad + \norm{w^\prime_t(\lambda) - w^\prime(\lambda)} \norm{\grad_1 \fo(w_t(\lambda), \lambda)}.
\end{align*}
Now, we note that
$\norm{w_t(\lambda)} \leq \norm{w_t(\lambda) - w(\lambda)} + \norm{w(\lambda)} \leq (\q^t + 1) \norm{w(\lambda)} \leq 2 \D$.
Therefore, it follows from Assumption~\ref{assA}\ref{ass:foLip} and Lemmas~\ref{lm:boundEPhi} and \ref{lm:optnormcontractive} that
\begin{align*}
    \norm{\grad f_t (\lambda) - \grad f (\lambda)} \leq& \left(\Lol + \Low \LPhi / (1 - \q)\right) \q^t \D \\ 
    &+ \Bo \norm{w^\prime(\lambda) - w'_t(\lambda)},
\end{align*}
where we used $\norm{w_t(\lambda) - w(\lambda)} \leq \q^t \norm{ w_0(\lambda) - w(\lambda)} 
= \q^t\norm{w(\lambda)} \leq \q^t \D$. Then, \eqref{eq:boundsolver} follows 
from Proposition~\ref{lm:boundDiff}.
\end{proof}

Now we address the proofs related to the \IMD\ method described in Section~\ref{se:AID}.

\gradBoundGeneral*
\begin{proof}
For the sake of brevity we set
\begin{align*}
    \hA &= I - \jac_1 \Phi (\wt, \lambda)^\top\!\!\!,
    &\A &= I - \jac_1\Phi (w(\lambda),\lambda)^\top\!\!\!,
\\
    \hAl &= \jac_2 \Phi (\wt, \lambda),
    & \Al &= \jac_2 \Phi (w(\lambda), \lambda), 
\\
    \hB &= \grad_1 \fo(\wt, \lambda), 
    & \B &= \grad_1 \fo(w(\lambda), \lambda), 
\\
    \hC &= \grad_2 \fo(\wt, \lambda),
    &\C &= \grad_2 \fo(w(\lambda), \lambda).
\end{align*}
It follows form Assumption~\ref{assC}\ref{ass:innerconv} that
$\norm{\wt - w(\lambda)} \leq \rf(t) \norm{w(\lambda)} \leq \rf(t) \D \leq \D$ and hence
$\norm{\wt} \leq \norm{\wt - w(\lambda)} + \norm{w(\lambda)} \leq 2\D$.
Then the following upper bounds related to the 
 above quantities  follow from Assumptions~\ref{assA}\ref{ass:philip}-\ref{ass:foLip}, equation \eqref{eq20200123c} and Lemma~\ref{lm:boundEPhi}.
\begin{align*}
&\norm{\hA\! -\! \A} \leq \rhow \rf(t)\D,\ 
\norm{\hAl \!-\! \Al} \leq \rhol \rf(t)\D,\\
&\norm{\hB - \B} \leq \Low\rf(t)\D,\ 
\norm{\hC -\C} \leq \Lol \rf(t)\D,\\
&\norm{\hA^{-1}}, \norm{\A^{-1}} \leq \frac{1}{\pd},\ \norm{\hAl} \leq \LPhi,
\ \norm{\hB},\norm{\B} \leq \Bo.
\end{align*}
Now, setting $\hZ = \hA^{-1}\hB$ and $\Z = \A^{-1}\B$, $\hat \grad f(\lambda)$\footnote{see point 3 in \Cref{algo2}.} and
\eqref{eq:20200123d} can be written as
\begin{equation*}
\hat\grad f(\lambda) = \hC + \hAl^\top \vk,\ 
\grad f(\lambda) = \C + \Al^\top\Z.
\end{equation*}
Then, we have
\begin{align*}
    \lVert &\hat\grad f(\lambda) - \grad f(\lambda) \rVert \\
    &= \norm{\hC + \hAl^\top \vk - \C - \Al^\top\Z} \\
    &\leq \norm{\hC - \C}
    \\ &\quad\ +\norm{\hAl^\top\vk - \hAl^\top\Z + \hAl^\top\Z  - \Al^\top\Z} \\
    &\leq \norm{\hC - \C}\\ 
    &\quad\ + \norm{\hAl} \norm{\vk - \Z} 
    + \norm{\hAl - \Al} \norm{\Z} \\
    &\leq \norm{\hC - \C} \\
    &\quad\ + \norm{\hAl}
    \norm{\vk - \Z}\!+\! \norm{\hAl - \Al} \norm{\A^{-1}} \norm{\B} \\
    &\leq \left(\Lol + \frac{\rhol \Bo}{\pd}  \right) \rf(t) \D + \LPhi \norm{\vk - \Z}.
\end{align*}
Moreover, it follows from \ref{assC}\ref{ass:linapprox} that
\begin{align*}
    \norm{\vk - \Z} 
    &\leq  \norm{\vk - \hZ} + \norm{\hZ - \Z} \\
    &\leq \hrf(k) \frac{\Bo}{\pd} 
    + \norm{\hZ - \Z}.
\end{align*}
Finally, we have 
\begin{align*}
\lVert\hZ &- \Z\rVert \\
&\leq \norm{\hA^{-1}\hB - \hA^{-1}\B + \hA^{-1}\B - \A^{-1}\B } \\
 &\leq \norm{\hA^{-1}} \norm{\hB - \B} + \norm{\B} \norm{\hA^{-1} - \A^{-1}}\\
 &\leq \frac{\Low\rf(t)\D}{\pd} 
 + \Bo\norm{\hA^{-1} - \A^{-1}}\\
 &\leq \frac{\Low\rf(t)\D}{\pd} + \Bo \norm{\hA^{-1}}\norm{\A - \hA}\norm{\A^{-1}}\\
 & \leq \frac{\Low\rf(t)\D}{\pd} + \frac{\Bo\rhow \rf(t)\D}{\pd^2}.
\end{align*}
Combining all together we get \eqref{eq:boundInvertible}.
As regards the second part of the statement, if Assumption~\ref{ass:contraction} is satisfied, then, in view of  Lemma~\ref{lm:optnormcontractive}, we can take  $\pd = 1-\q$ in \eqref{eq20200123c} and obtain \eqref{eq:boundInvertibleContraction}.
\end{proof}

The following two propositions allow us to derive the refined iteration complexity bound for AID-FP.

\begin{proposition}
\label{thm:20200124a}
Suppose that \eqref{ass:fixedpointmethod} holds.
Let $\lambda \in \Lambda, t \in \N$. 
Let $\wpo = 0 \in \R^{d\times n}$ and for every integer $k \geq 1$,
\begin{equation*}
    \wpk  =  \jac_1\Phi(\wt, \lambda) \wpkm + \jac_2\Phi(\wt, \lambda).
\end{equation*}
Then, for every $k \in \N$,
\begin{equation}
	 \wpk^{\!\top}\grad_1 \fo (\wt, \lambda)  = \jac_2\Phi (\wt, \lambda)^\top \!\vk.
\end{equation}
\end{proposition}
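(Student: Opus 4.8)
The plan is to reduce both recursions to closed form and then recognise the identity as a transpose manipulation — morally, it is the statement that the forward-mode recursion for the Jacobian of the inner iterates and the reverse-mode (adjoint) recursion \eqref{ass:fixedpointmethod} produce the same vector once contracted against $\grad_1\fo$. To lighten notation I would write $J := \jac_1\Phi(\wt, \lambda)$, $M := \jac_2\Phi(\wt, \lambda)$ and $b := \grad_1\fo(\wt, \lambda)$; note that $J$ is the same matrix, up to transposition, driving both affine recursions.

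First I would show by induction on $k$ that $\wpk = \sum_{i=0}^{k-1} J^i M$ for every $k \in \N$: the case $k=0$ is the empty sum, which equals $0 = \wpo$, and the inductive step is
\[
\wpk = J\,\wpkm + M = J\sum_{i=0}^{k-2} J^i M + M = \sum_{i=0}^{k-1} J^i M .
\]
The very same argument applied to \eqref{ass:fixedpointmethod}, whose linear part is $J^\top$ and whose constant term is $b$ (and which starts at $\vo = 0$), gives $\vk = \sum_{i=0}^{k-1} (J^\top)^i b$. Then I would simply transpose the first formula, use $(J^i)^\top = (J^\top)^i$, and pull $M^\top$ out of the sum:
\[
\wpk^\top b = \Big(\sum_{i=0}^{k-1} J^i M\Big)^{\!\!\top} b = \sum_{i=0}^{k-1} M^\top (J^\top)^i b = M^\top \sum_{i=0}^{k-1} (J^\top)^i b = M^\top \vk ,
\]
which, unwinding the shorthand, is exactly $\wpk^\top \grad_1\fo(\wt, \lambda) = \jac_2\Phi(\wt, \lambda)^\top \vk$.

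I do not expect a genuine obstacle here: this is a routine linear-algebra identity, and the only thing requiring care is index bookkeeping (matching the ranges of the two sums and correctly commuting transposes with powers of $J$). The one mild subtlety worth flagging is that a direct induction on the stated identity does not close by itself — expanding $\wpk^\top b$ produces a term $\wpkm^\top (J^\top b)$, which the induction hypothesis handles only for the specific right-hand side $b$ — so one either passes through the closed forms as above, or, equivalently, strengthens the inductive statement to allow an arbitrary vector in place of $b$.
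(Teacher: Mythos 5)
Your proof is correct and follows essentially the same route as the paper's: both unroll the two affine recursions into the closed forms $\sum_{i=0}^{k-1} J^i M$ and $\sum_{i=0}^{k-1}(J^\top)^i b$, then transpose and pull $M^\top$ out of the sum. Your closing remark about why a direct induction on the stated identity does not close is a sensible observation, but the argument as given matches the paper's.
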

\begin{proof}
We set $Y =\partial_1 \Phi(\wt,\lambda) \in \R^{d\times d}$, $C = \jac_2\Phi(\wt, \lambda) \in \R^{d\times n}$,
and $b = \nabla_1 E(\wt, \lambda) \in \R^d$. Let $k \in \N$, $k \geq 1$. Then, 
 \begin{align*}
	\wpk &= Y\wpkm  + C \\
	&= Y^2\wpkmm  + (1 + Y)C \\ 
	&\;\;\vdots \notag \\[-0.5ex]
	&=   Y^k \wpo + \sum_{i=0}^{k-1} Y^i C\\[-0.5ex]
	&= \sum_{i=0}^{k-1} Y^i  C.
\end{align*}
In the same way, it follows 
from \eqref{ass:fixedpointmethod} that
$\vk = Y^\top v_{t,k-1}(\lambda) + b =
\sum_{i=0}^{k-1} (Y^\top)^i b$.
Therefore,  we have
\begin{align*}
	 \wpk^\top b &= C^\top \left(\sum_{i=0}^{k-1} Y^i \right)^\top b \\
	 &= C^\top \sum_{s=i}^{k-1} (Y^\top)^i b\\
	 &= C^\top \vk
\end{align*}
and the statement follows.
\end{proof}

Using Proposition~\ref{thm:20200124a}, for AID-FP we can write
\begin{equation}
\label{eq:20200124d}
  \hat\grad{f}(\lambda) \!=\! 
  \grad_{\!2} \fo (\wt, \lambda) +
  \wpk^{\!\top} \!\grad_{\!1}\fo (\wt, \lambda).
\end{equation}
Then a result similar to Proposition~\ref{lm:boundDiff}
can be derived.

\begin{proposition}
\label{lm:diffFix} 
Suppose that Assumption~\ref{assA}\ref{ass:fixedpoint}\ref{ass:philip} and Assumption~\ref{ass:contraction} hold. Let $\lambda \in \Lambda$ and $(\wpk)_{k \in \N}$ be defined as in Proposition~\ref{thm:20200124a}. 
Then, for every $t, k \in \N$, with $t\geq 1$,
\begin{multline*}
 \norm{\wpk - w^\prime(\lambda)} \\
 \leq \left(\!\rhol + \rhow\frac{\LPhi}{(1 - \q)}\!\right)\!\frac{\D(1- \q^k)}{1-\q}  \rf(t)
+ \frac{\LPhi}{1 - \q} \q^{k}.   
\end{multline*}
\end{proposition}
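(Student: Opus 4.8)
The plan is to mimic the argument used in the proof of Proposition~\ref{lm:boundDiff}, but now tracking the error of the auxiliary sequence $(\wpk)_{k\in\N}$ against $w'(\lambda)$, where $\wpk$ is generated by the \emph{fixed-$t$} iteration $\wpk = \jac_1\Phi(\wt,\lambda)\wpkm + \jac_2\Phi(\wt,\lambda)$ rather than by differentiating the lower-level dynamics. The key structural fact is that $w'(\lambda)$ is the fixed point of the affine map $Z \mapsto \jac_1\Phi(w(\lambda),\lambda)Z + \jac_2\Phi(w(\lambda),\lambda)$ (this is exactly \eqref{eq:20200224b}), while $\wpk$ is obtained by iterating the \emph{perturbed} affine map with $\jac_i\Phi$ evaluated at $\wt$ instead of $w(\lambda)$. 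So the error decomposes into a contraction term plus a perturbation term coming from the Lipschitz continuity of $\jac_1\Phi$ and $\jac_2\Phi$ (Assumption~\ref{assA}\ref{ass:philip}) applied to $\norm{\wt - w(\lambda)}$, which by Assumption~\ref{assC}\ref{ass:innerconv} (valid here since Assumption~\ref{ass:contraction} implies it with $\rf(t)=\q^t$, but we keep the abstract $\rf(t)$) is bounded by $\rf(t)\D$.

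The concrete steps: First, write $\wpk - w'(\lambda)$ by subtracting the two affine recursions and inserting $\pm \jac_1\Phi(w(\lambda),\lambda)\wpkm$, to get
\begin{align*}
\norm{\wpk - w'(\lambda)}
&\le \norm{\jac_1\Phi(\wt,\lambda)} \norm{\wpkm - w'(\lambda)} \\
&\quad + \norm{\jac_1\Phi(\wt,\lambda) - \jac_1\Phi(w(\lambda),\lambda)}\norm{w'(\lambda)} \\
&\quad + \norm{\jac_2\Phi(\wt,\lambda) - \jac_2\Phi(w(\lambda),\lambda)}.
\end{align*}
Then bound $\norm{\jac_1\Phi(\wt,\lambda)}\le \q$ (Assumption~\ref{ass:contraction}), $\norm{w'(\lambda)}\le \LPhi/(1-\q)$ (from \eqref{eq:20200124c} via Lemmas~\ref{lm:boundEPhi} and \ref{lm:optnormcontractive}), and the two Lipschitz terms by $\rhow\D\rf(t)$ and $\rhol\D\rf(t)$ respectively, using $\norm{\wt-w(\lambda)}\le\rf(t)\D$. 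Setting $p := \rhol + \rhow\LPhi/(1-\q)$ and $\Delta'_k := \norm{\wpk - w'(\lambda)}$, this yields the scalar recursion $\Delta'_k \le \q\,\Delta'_{k-1} + p\,\D\,\rf(t)$, i.e. a recursion of the form in Lemma~\ref{lem:sequence} with a \emph{constant} forcing term $\tau \equiv p\D\rf(t)$. Applying Lemma~\ref{lem:sequence}\ref{eq:20200124f} gives $\Delta'_k \le \q^k \Delta'_0 + p\D\rf(t)(1-\q^k)/(1-\q)$, and finally bounding the initial error $\Delta'_0 = \norm{w'(\lambda) - \wpo} = \norm{w'(\lambda)} \le \LPhi/(1-\q)$ (since $\wpo = 0$) produces the claimed inequality after substituting back the definition of $p$.

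The one place that needs a little care — and the main (mild) obstacle — is justifying that $\norm{\wt - w(\lambda)} \le \rf(t)\D \le \D$, hence $\norm{\wt}\le 2\D$, so that all the constants from Lemma~\ref{lm:boundEPhi} and equation~\eqref{eq20200123c} are applicable along the whole trajectory; under Assumption~\ref{ass:contraction} with $w_0(\lambda)=0$ this is immediate since $\rf(t)=\q^t\le 1$, but the statement is phrased with a general $\rf$, so one should invoke Assumption~\ref{assC}\ref{ass:innerconv} explicitly (it holds here, as noted after that assumption). Everything else is a routine contraction-plus-perturbation estimate identical in spirit to Proposition~\ref{lm:boundDiff}, the only difference being that the perturbation term is now $k$-independent (it depends on the fixed $\wt$), which is precisely why Lemma~\ref{lem:sequence}\ref{eq:20200124f} rather than \ref{eq:20200124g} is the relevant branch and why the factor $(1-\q^k)/(1-\q)$ appears in place of the $t\,\q^{t-1}$ factor of Proposition~\ref{lm:boundDiff}.
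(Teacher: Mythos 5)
Your proposal is correct and follows essentially the same route as the paper's proof: the same insertion of cross terms to get the recursion $\hat\Delta'_k \leq \q\,\hat\Delta'_{k-1} + p\,\norm{\wt - w(\lambda)}$ with $p = \rhol + \rhow\LPhi/(1-\q)$, the same application of Lemma~\ref{lem:sequence}\ref{eq:20200124f} with the constant forcing term, and the same bounds on $\norm{w'(\lambda)}$ and the initial error. Your remark about verifying $\norm{\wt}\leq 2\D$ so that the constants of Lemma~\ref{lm:boundEPhi} apply is a point the paper defers to the proof of Theorem~\ref{boundFixed}, but it does not change the argument.
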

\begin{proof}
Let $t, k \in \N$, with $t,k \geq 1$. Recalling that
\begin{align*}
    &\wpk =   \jac_1\Phi( \wt, \lambda) \wpkm  + \jac_2\Phi(\wt, \lambda) \\
     &w^\prime(\lambda)=\jac_1\Phi(w(\lambda), \lambda) w^\prime(\lambda) + \jac_2\Phi(w(\lambda), \lambda)
\end{align*}
we can bound the norm of the difference as follows
\begin{align*}
    \lVert \wpk &- w^\prime(\lambda) \rVert\\
    \leq &  \norm{\jac_1\Phi ( \wt, \lambda) - \jac_1\Phi ( w(\lambda), \lambda)} \norm{w^\prime(\lambda)} \\ 
    &+ \norm{\jac_1\Phi ( \wt, \lambda)} \norm{\wpkm - w^\prime(\lambda)} \\
    & +\norm{\jac_2\Phi ( \wt, \lambda) - \jac_2\Phi ( w(\lambda), \lambda)}\\
    \leq &  (\rhol + \rhow\LPhi/(1-\q)) 
    \norm{\wt - w(\lambda)} \\
    &+ \q \norm{\wpkm - w^\prime(\lambda)},
\end{align*}
which gives a recursive inequality. Then, setting $p :=  \rhol + \rhow\LPhi/(1-\q)$, $\Delta_{t} : 
= \norm{\wt - w(\lambda)}$ and $\hat{\Delta}^\prime_k := \norm{\wpk - w^\prime(\lambda)}$, we have 
\begin{equation*}
    \hat{\Delta}^\prime_k \leq 
    \q\hat{\Delta}^\prime_{k-1} + p \Delta_{t}.
\end{equation*}
Therefore, it follows from 
Lemma~\ref{lem:sequence}\ref{eq:20200124f} with $\tau =p \Delta_{t}$, that
\begin{align*}
    \hat{\Delta}^\prime_k 
    &\leq \q^{k}\hat{\Delta}^\prime_0 + p \Delta_{t} \frac{1 - \q^k}{1-\q} \\
    &\leq \frac{\LPhi}{1 - \q} \q^{k} 
    + p \D\rf(t)\frac{ 1- \q^k}{1-\q},
\end{align*}
where in the last inequality we used
Assumption~\ref{assC}\ref{ass:innerconv} and 
(see \eqref{eq:20200124c})
$\hat{\Delta}^\prime_0 = \norm{\wpo - w^\prime(\lambda) } = \norm{w^\prime(\lambda)}  
   \leq \LPhi/(1 - \q)$. 
The statement follows.
\end{proof}

\boundFixed*
\begin{proof}
Let $t \in \N$ with $t\geq 1$ and let
 $(\wpk)_{k \in \N}$ be defined as in Proposition~\ref{thm:20200124a}. Then,
the difference between exact and approximate gradients
can be bound as follows
\begin{align*}
    \lVert \hat\grad f(\lambda) &- \grad f(\lambda)\rVert\\  
    \leq & \norm{\grad_2 \fo(\wt, \lambda) - \grad_2 \fo(w(\lambda), \lambda) } \\
    &+\norm{w^\prime(\lambda)} 
    \norm{\grad_1 \fo(\wt, \lambda) - \grad_1 \fo(w(\lambda),\lambda)} \\
    &+ \norm{w^\prime(\lambda) - \wpk} \norm{\grad_1 \fo(\wt, \lambda)}.
\end{align*}
Now note that
$\norm{w_t(\lambda)} \leq \norm{w_t(\lambda) - w(\lambda)} + \norm{w(\lambda)} \leq (\rf(t) + 1) \norm{w(\lambda)} \leq 2 \D$.
Then it follows from the assumptions and 
Lemmas~\ref{lm:optnormcontractive} and \ref{lm:boundEPhi} that
\begin{align*}
    \norm{\grad f (\lambda) - \hat\grad{f} (\lambda)} \leq& \left(\Lol + \frac{\Low \LPhi}{1 - \q}\right) \rf(t) \D \\
    &+ \Bo \norm{\wpk - w^\prime(\lambda)},
\end{align*}
and the last term can be bounded using Proposition~\ref{lm:diffFix}.
\end{proof}

\section{Gradient Descent as a Contraction Map}\label{app:gd}

\label{appB}
Consider problem~\eqref{mainprob1} and take
\begin{equation*}
  \Phi(w, \lambda) = w - \sz \grad_1 \fin(w, \lambda),
\end{equation*}
where $\ell\colon \R^d\times \Lambda \to \R$
is twice continuously differentiable and, for every $\lambda \in \Lambda$, 
\vspace{-2ex}
\setlist[enumerate]{itemsep=0mm}
\begin{enumerate}[{\rm(i)}]
    \item $\ell(\cdot, \lambda)$ is 
$\mufi$-strongly convex 
and $\Llw$-Lipschitz smooth, with $\mufi>0$ and $\Llw>0$.
    \item $\alpha\colon \Lambda \subset \R^n \to \R_{++}$ 
is differentiable.
\end{enumerate}
\vspace{-2ex}
Then, if $\sz \in (0, 2/\Llw)$, $\Phi(\cdot, \lambda)$
 is a contraction with constant $\q = \max\{1-\sz\mufi, \sz \Llw - 1\}$. The optimal choice of the step-size leads to set $\sz = 2/(\Llw + \mufi)$ giving
\begin{equation*}
    \q = \frac{\Llw - \mufi}{\Llw + \mufi} = \frac{\cond -1}{\cond +1},
\end{equation*}
where $\cond = \Llw/\mufi$ is the condition number of the lower level problem in \eqref{mainprob1}. 
Note that, for every $t \in \N$ and $\lambda \in \Lambda$,
\begin{equation}
    \mufi I \preccurlyeq \nabla_1^2 \ell(\wt,\lambda) 
    \preccurlyeq \Llw I
\end{equation}
hence the condition number of $\nabla_1^2 \ell(\wt,\lambda)$
is smaller than $\cond$.

We can write the derivatives of $\Phi$ as:
\begin{align}
    \jac_2\Phi(w, \lambda) &= -\grad_1 \fin (w, \lambda) \dsz^\top - \sz \hess_{21} \fin (w, \lambda) \label{eq:jacphi2}\\
    \jac_1\Phi(w, \lambda) &= I - \sz \hess_{1} \fin (w, \lambda)
\end{align}
\begin{remark}\label{re:avoidcrazystuff}
When evaluated in $(w(\lambda), \lambda)$, one does not need $\dsz$ for~\eqref{eq:jacphi2}, because the first term on the r.h.s of \cref{eq:jacphi2} is $0$:
\begin{equation*}
    \jac_2\Phi(w(\lambda), \lambda) = - \sz \hess_{21} \fin (w(\lambda), \lambda).
\end{equation*}
\end{remark}
From the remark it follows that $\norm{\jac_2\Phi(w(\lambda), \lambda)} = \sz \norm{\hess_{21} \fin (w(\lambda), \lambda) }$

Furthermore, if we assume $\hess_{1}\fin (\cdot, \lambda)$ is $\rhowl$-Lipschitz and $\hess_{21}\fin (\cdot, \lambda)$ is $\rholl$-Lipschitz then, calling $\Delta_{\jac_1 \Phi} : = \norm{\jac_1\Phi(w_1, \lambda) - \jac_1\Phi(w_2, \lambda)}$ and $\Delta_{\jac_2 \Phi} : = \norm{\jac_2\Phi(w_1, \lambda) - \jac_2\Phi(w_2, \lambda)}$ we have:
\begin{align*}
   \Delta_{\jac_1 \Phi} &= \norm{\sz \left(\hess_{1} \fin (w_1, \lambda) - \hess_{1} \fin (w_2, \lambda) \right)} \\
    &\leq \sz \rhowl \norm{w_1 - w_2}.
\end{align*}
and
\begin{alignat*}{2}
    \Delta_{\jac_2 \Phi} &&=& \lVert \left(\grad_1 \fin (w_1, \lambda) - \grad_1 \fin (w_2, \lambda)\right)\dsz^\top \\
    && & + \sz \left(\hess_{21} \fin (w_1, \lambda) - \hess_{21} \fin (w_2, \lambda) \right)\rVert \\
    && \leq & (\Llw \norm{\dsz} + \sz\rholl) \norm{w_1 - w_2}.
\end{alignat*}
Thus, Assumption~\ref{assA}\ref{ass:philip} holds with $\rhow = \sz \rhowl$ and 
$\rhol = \Llw \norm{\dsz} + \sz\rholl$.
Moreover,
if we pick $L_{\fin,\lambda}$ such that $\norm{\hess_{21} \fin (w(\lambda), \lambda) } \leq L_{\fin,\lambda}$, then
Theorems~\ref{boundSolver},\ref{th:gradBoundImplicit} and \ref{boundFixed}
hold with 
\begin{align*}
    \q &= \max\{1-\sz\mufi, \sz \Llw - 1\} \\
    c_1(\lambda) &:= \left(\Lol  + \frac{\Low \sz L_{\fin,\lambda}}{1-\q}\right) \D \\
    c_2(\lambda) &:= \left(\Llw \norm{\dsz} + \sz\rholl \right) \Bo \,\D \\  &  \ \ + \frac{\rhowl \sz^2 L_{\fin,\lambda} \Bo \,\D}{1 - \q} \\
    c_3 (\lambda) &:= \frac{\Bo \,\sz \norm{\hess_{21} \fin (w(\lambda), \lambda) }}{(1-\q)}.
\end{align*}

Given \Cref{re:avoidcrazystuff} and to avoid additional complexity of the algorithm, we can consider replacing $\jac_2\Phi(w, \lambda)$ with $\hat \jac_2\Phi(w, \lambda) = - \sz \hess_{21} \fin (w, \lambda)$ in the expression for both $\grad f_t(\lambda)$ and  $\hat \grad f(\lambda)$.
We apply this change in all the experiments of case~\eqref{mainprob1}.


\section{Experiments}\label{app:exp}

\subsection{Hypergradient Approximation}\label{sec:datagen}
In this section we provide details for the experiments in \Cref{sec:4.1}.

We define the train and validation kernel matrices in~\eqref{eq:krr} as follows: 
\begin{align*}
    \kerv(\gamma)_{i,j} &= \exp \left[- \left(\Xv_i -\Xt_j\right)^\top {\rm diag}(\gamma) \left(\Xv_i -\Xt_j\right)\right] \\
    \kert(\gamma)_{i,j} &= \exp\left[-\left(\Xt_i -\Xt_j\right)^\top {\rm diag}(\gamma) \left(\Xt_i -\Xt_j\right)\right].
\end{align*}

We generate synthetic data by sampling each element of $\Xt$ and $\Xv$ from a normal distribution.  $\yt$ (and in the same way $\yv$)  is subsequently obtained in the following ways for the different settings outlined in \Cref{sec:4.1}.
\begin{alignat*}{2}
    \yt &= sign( \Xt w_* + m\epsilon) \quad \qquad &&\text{(LR)}\\
    \yt &= \Xt w_* + m\epsilon \quad \qquad &&\text{(KRR)} \\
    \yt &= \Xt(w_* + b_*) + m\epsilon   \qquad &&\text{(BR)}\\
    \yt &= \Xt H_* w_* + m\epsilon  \qquad &&\text{(HR)}
\end{alignat*}
where $sign$ is the elementwise sign function, each element of $\epsilon$, $w_*$  and $H_*$ is sampled from a normal distribution, $b_*= \mathbb{1}$\footnote{where $\mathbb{1} \in \R^d$ is a vector with all its components set to one.}, and $m=0.1$. $\Xt$,$\Xv$ have dimension $50 \times 100$  while $H$ is a $100 \times 200$ matrix. 
The results in \Cref{fig:synthlambdas} report mean and std over  $20$ values of $\lambda$ such that $ \lambda_i \sim \U(\lambda_{\min}, \lambda_{\max})$ for $1 \leq i \leq n$ where $\U$ is the uniform distribution on the interval [$\lambda_{\min}$, $\lambda_{\max}$] which is  $[0.01, 10]$, $[0.0005, 0.005]$, $[-5, 5]$ and $[-1, 1]$ respectively for LR, KRR, BR, HR. Furthemrmore we set $\beta=1$ for BR and  $\beta=10$ for HR. $\lambda_{\min}$, $\lambda_{\max}$ and $\beta$ are selected as to make the expected lower-level problem difficult ($\q$ close to $1$). 

We note that in \Cref{fig:synthlambdas} the asymptotic error for KRR, BR and HR is considerably large. We suspect that this is due to the numerical error made by the hypergradient approximation procedures being larger than the one made when computing the exact hypergradient using the closed form expression of $w(\lambda)$.
Indeed, we have observed that using double precision halves the asymptotic error, but we did not investigate further. Our theoretical analysis does not take this source of error into account since it assumes infinite precision arithmetic.

\subsection{Bilevel Optimization}\label{app:bopt}
This section contains the details and some additional results on the experiments in \Cref{sec:4.2} on problems of type~\eqref{mainprob1}\footnote{This includes all the settings except equilibrium models.}.

The average cross-entropy in 20 newsgroup and Fashion MNIST is defined as
\begin{equation*}
\mathrm{CE}(Z, y) = -\frac{1}{|y|}\sum_{k=1}^{|y|}\sum_{i=1}^c \delta_{i, y_{k}} \log\left(\frac{e^{Z_{ki}}}{\sum_{j=1}^c e^{Z_{kj}}} \right) 
\end{equation*}
where $Z_k \in \R^{c}$ $y_k \in \{0,\dots, c\}$ are respectively the prediction scores and the class label for the $k$-th example, $\delta_{i, y_{k}}$ equals $1$ when $i=y_{k}$ and $0$ otherwise and $|y|$ is the number of examples.

To solve the upper-level problem we use gradient descent with fixed step-size where the gradient is estimated using \ITD{} or \IMD{} methods. In particular, we generate the sequence $(\lambda_i)^s_{i}$ as follows:
\begin{equation*}
    \lambda_i = \lambda_{i-1} -\zeta g(\lambda_{i-1}) 
\end{equation*}
where $g(\lambda)= \nabla f_t(\lambda)$ for \ITD{} and $g(\lambda) = \hat \nabla f(\lambda)$ for \IMD{} are computed respectively using  \Cref{algo1} and \Cref{algo2} with $t$ and $k$ fixed throughout the optimization. 

All methods compute $w_t(\lambda)$ using $t$-steps of the same algorithm solving the lower-level problem in~\eqref{mainprob1}. In particular 
we use heavy ball with optimal constants for Parkinson and gradient descent with step-size manually chosen for the other two settings where it is harder to compute the optimal one. Specifically, we set the step-size to $10^3$ for 20 newsgroup and $10^4$ for Fashion MNSIT\footnote{Note that in this case the step-size is constant w.r.t. $\lambda$ whereas the optimal one would vary with $\lambda$.}.

The initial parameter $\lambda_0$ is set to $(\beta_0, \gamma_0) = (0, - \log(p) \mathbb{1})$\footnote{where $\mathbb{1} \in \R^p$ is a vector with all its components set to one.} for Parkinson, $0 \in \R^p$ for 20 newsgroup and $X_0 = 0 \in \R^{c \times p}$ for Fashion MNIST. Furthermore, the regularization parameter $\beta$ is set to $1$ for Fashion MNIST.

We choose the step-size $\zeta$ with a grid search over 30 values in a suitable interval for each problem, choosing the one bringing the lowest value of the approximate objective $f_t(\lambda_s) = E(w_t(\lambda_s), \lambda_s)$ where $s$ is equal to $1000$, $500$, $4000$ for Parkinson, 20 newsgroup and Fashion MNIST respectively. The grid search values are spaced evenly in log scale inside the intervals $[10^{-6}, 10]$,  $[10^{-4}, 10^4]$ and  $[10^{-10}, 10^{-2}]$ respectively for Parkinson, 20 newsgroup and Fashion MNIST.

We note that the results In Table~\ref{tb:objective} report the value of the approximate objectve $f_t(\lambda_s) = \fo(w_t(\lambda_s), \lambda_s)$ and the test accuracy (computed on $w_t(\lambda_s)$). For completeness, in Table~\ref{tb:trueobj} we report $f(\lambda_s) = \fo (w(\lambda_s), \lambda_s))$ and the test accuracy (computed on $w(\lambda_s)$) where $w(\lambda_s)$ is computed using RMAD (exploiting the closed form of $w(\lambda_s)$) for Parkinson and using  $2000$ steps of gradient descent starting from $w_0(\lambda)=0$ for 20 newsgroup and Fashion MNIST.

\begin{table*}[th]
\small
\caption{The values of $f(\lambda_s)$ and test accuracy (in percentage) are displayed after $s$ gradient descent steps, where $s$ is $1000$, $500$ and $4000$  for Parkinson, 20 news and Fashion MNIST respectively. $k_r=10$ for Parkinson and 20 news while for Fashion MNIST $k_r=5$.}\label{tb:trueobj}
\begin{tabular}[t]{lrr}
\multicolumn{3}{c}{\textbf{Parkinson}} \\
\toprule
&   $t=100$ &   $t=150$ \\
\midrule
 ITD       & 2.39 (75.8) & 2.11 (69.7) \\
 FP $k=t$  & 2.36 (81.8) & 2.19 (77.3) \\
 CG $k=t$  & 2.20 (78.8) & 2.19 (77.3) \\
 FP $k=k_r$ & 2.71 (80.3) & 2.60 (78.8) \\
 CG $k=k_r$ & 2.17 (78.8) & 1.99 (77.3) \\
\bottomrule
\end{tabular}
\hfill
\begin{tabular}[t]{rrr}
\multicolumn{3}{c}{\textbf{20 newsgroup}} \\
\toprule
$t=10$ &   $t=25$ &   $t=50$ \\
\midrule
 1.155 (59.4) & 1.082 (61.1) & 1.058 (61.6) \\
 1.155 (59.5) & 1.083 (61.1) & 1.058 (61.6) \\
 0.983 (62.9) & 0.955 (62.9) & 0.946 (63.5) \\
 1.155 (59.5) & 1.078 (61.7) & 1.160 (59.1) \\
 0.983 (62.9) & 0.989 (62.6) & 1.001 (62.3) \\
\bottomrule
\end{tabular}
\hfill
\begin{tabular}[t]{rrr}
\multicolumn{2}{c}{\textbf{Fashion MNIST}} \\
\toprule
$t=5$ &   $t=10$ \\
\midrule
 0.497 (84.1) & 0.431 (83.8) \\ 
 0.497 (84.1) & 0.431 (83.8) \\ 
 0.522 (83.8) & 0.424 (84.0) \\ 
 0.497 (84.1) & 0.426 (83.9) \\ 
 0.522 (83.8) & 0.424 (84.0) \\ 
\bottomrule
\end{tabular}
\end{table*}

\subsection{Equilibrium Models with Convolutions} \label{sec:apx:eqmconv}
\begin{figure*}[t]
\vspace{-.1truecm}
    \centering
    \includegraphics[width=\textwidth]{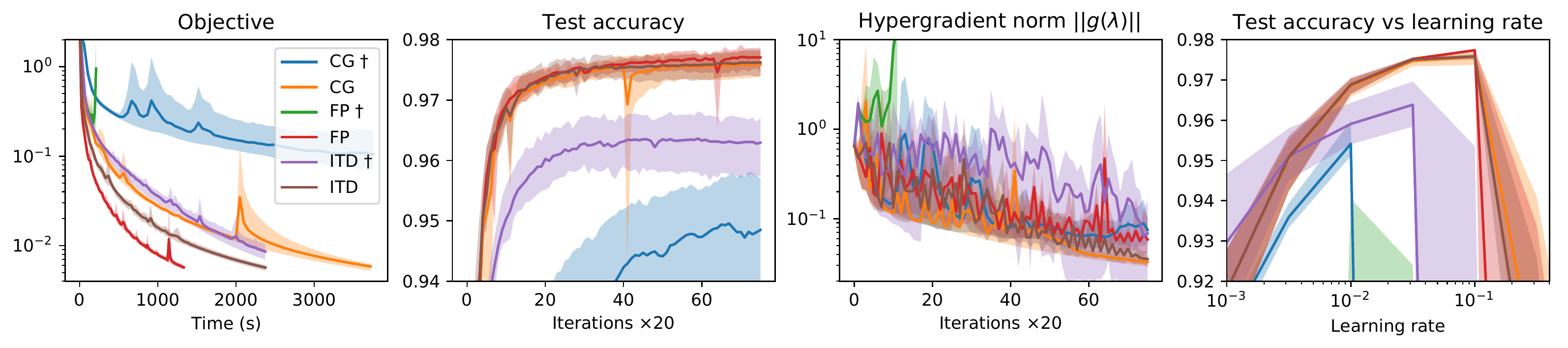}
    \vspace{-5mm}
    \caption{Experiments with convolutional EQMs.
    Mean (solid line) and point-wise minimum-maximum range (shaded region) across 5 random seeds. The seed only controls the initialization of $\lambda$.
    The estimated hypergradient $g(\lambda)$ is 
    equal to $\nabla f_t(\lambda)$ for \ITD{} and $\hat{\nabla} f(\lambda)$ for \IMD.
    We used $t=k=20$ for all methods and Nesterov momentum (1500 iterations) for optimizing $\lambda$, applying a projector operator at each iteration except for the methods marked with $\dag$. Note that in the first three plots the step-size for the unconstrained experiments is smaller, to prevent divergence.
    }
    \label{fig:conv_eqm}
\end{figure*}
\begin{figure*}[t]
    \centering
    \includegraphics[width=\textwidth]{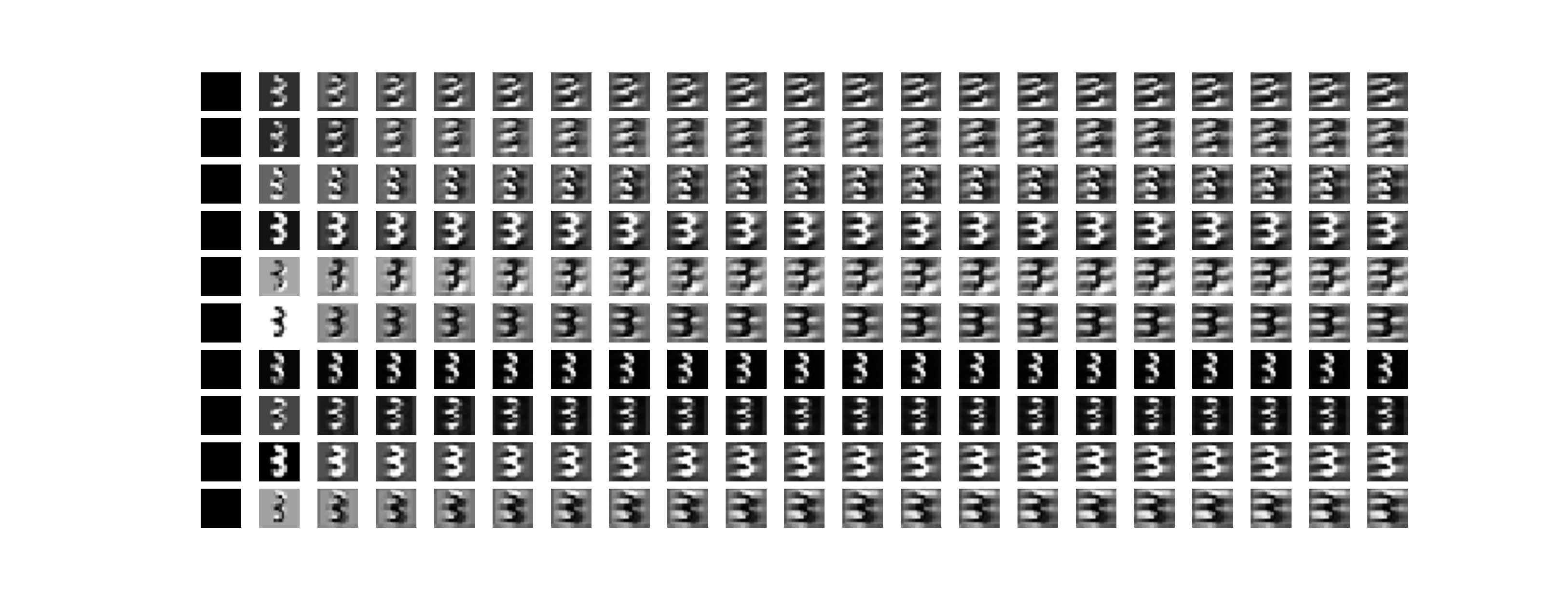}
    \includegraphics[width=\textwidth]{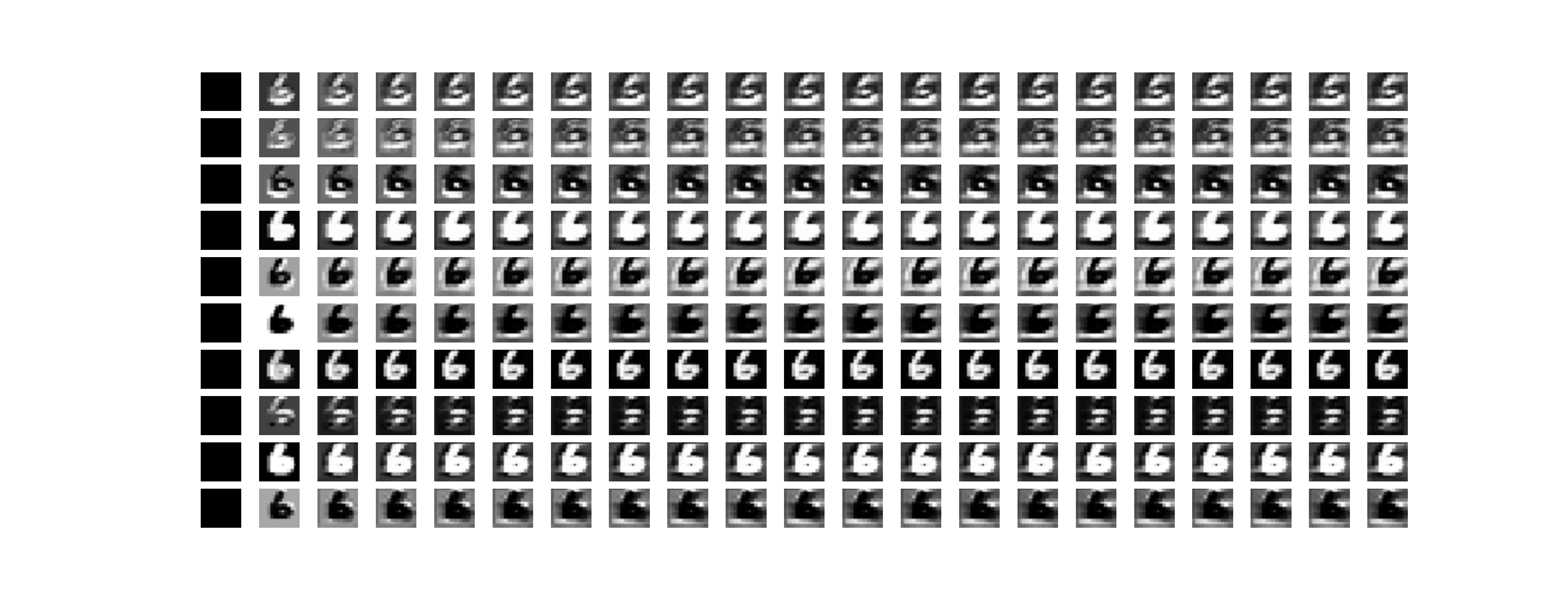}
    \caption{Images of two samples of the states filter maps $w_i \in \R^{10\times 14\times14}$ for a three and a six from the MNIST dataset, learnt with the fixed-point method and with projection. Each of the ten rows represents a filter and the x-axis proceeds with the iterations of the EQM dynamics (for a total of $t=20$ iterations). The states are initialized to $0$ (black images on the left) and then the mapping \eqref{eq:EQM_CONV} is iterated 20 times to approximately reach the fixed point representation (rightmost images).}
    \label{fig:states}
\end{figure*}
In this section we report a series of experiments on equilibrium models quite similar to those of the last paragraph of Section \ref{sec:4.2}, but with convolutional and max-pooling operators in place of the affinities of Equation \eqref{eq:qlin_eqm}.  In particular we model the learnable dynamics with parameters $\gamma=(K, K', c)$ as
\begin{equation}
    \label{eq:EQM_CONV}
    \phi_i(w_i, \gamma) = \tanh{\left(K\star w_i + \mu_{2\times 2}(K' \star x_i) + c
    \right)}
\end{equation}
where $w_i\in\R^{h\times 14 \times 14}$ are the state feature maps, $\star$ denotes multi-channel bidimensional cross-correlation, $K$ and $K'$  contain $h$ $3\times 3$ convolutional kernels each
and $\mu_{2\times 2}$ denotes the max-pooling operator with a $2\times 2$ field and stride of $2$.   
The state feature maps are passed through a max-pooling operator before being flattened and fed to a multiclass logistic classifier. We set $h=10$ for all the experiments. 
We use the results and the code of \citet{sedghi2019singular}
to efficiently perform the projection of the linear operator associated to $K$ into the unit spectral ball\footnote{Specifically, we project onto $\norm{\mathrm{c}(K)} \leq 0.999$, where $\mathrm{c}(K)$ is an $h\times h$ matrix of doubly block circulant matrices; see \citet{sedghi2019singular} for details.}. 
Data and optimization method for the upper objective are the same of Section \ref{sec:4.2}.

The results, reported in Figure \ref{fig:conv_eqm}, show similar behaviours of those in Section \ref{sec:4.2}, albeit with more marked differences among the methods, especially for the experiments without projection (denoted by $\dag$
 in the figure). The statistical performances of the contractive convolutional EQM exceed abundantly those given by simpler dynamics of \eqref{eq:qlin_eqm}, with the fixed-point method (red line) being slightly better then the others. We show some visual examples of the learned dynamics in Figure \ref{fig:states}, where we plot the 10 bidimensional state filter maps as the iterations of \eqref{eq:EQM_CONV} proceed. 
 
 Interestingly, when the projection is not performed, optimization with the fixed-point scheme to compute the hypergradient (akin to recurrent backpropagation, see green shaded region in the rightmost plot of Figure \ref{fig:conv_eqm}) 
 does not reliably converge for all the probed values of the step-size,
 indicating once more the importance of the contractiveness assumption for AID methods. 
We finally note that regularizing the norm of $\partial_1\phi_i$ or adding $L_1$ or $L_{\infty}$ penalty terms on the matrix of the state-wise linear transformation may encourage, but does not strictly enforce, such condition. This may in part explain some difficulties previously encountered in training EQM-like  models, e.g. in the context of relational learning (graph neural networks).

\end{document}